\documentclass[12pt]{article}

\RequirePackage{amsthm,amsmath,amssymb,amsfonts,graphicx,bm,bbm,mathrsfs}
\usepackage{authblk}
\usepackage{enumerate}
\usepackage[margin = 1in]{geometry}
\usepackage{parskip}
\usepackage{fancyhdr}
\usepackage[usestackEOL]{stackengine}

\usepackage{times}
\usepackage{natbib}
\usepackage{float}
\usepackage{pgfplots}
\usepgfplotslibrary{statistics}

\usepackage{fullpage}
\usepackage[OT1]{fontenc}
\usepackage[utf8]{inputenc}
\RequirePackage[colorlinks,citecolor=blue,urlcolor=blue,breaklinks=true]{hyperref}
\usepackage{breakcites}
\usepackage{cleveref, autonum}
\usepackage{booktabs} 
\usepackage{tikz}
\usepackage[title]{appendix}
\usepackage{url}
\usepackage{comment}
\usepackage{xcolor}

\theoremstyle{plain}
\newtheorem{theorem}{Theorem}
\newtheorem{lemma}[theorem]{Lemma}

\newtheorem{corollary}[theorem]{Corollary}
\newtheorem{definition}[theorem]{Definition}
\newtheorem{assumption}{Assumption}
\renewcommand{\theassumption}{A}




\newcommand{\bI}{\bm{I}}

\newcommand\bx{\bm{x}}
\newcommand\by{\bm{y}}

\newcommand\bw{\bm{w}}

\newcommand{\PP}{\mathbb{P}}
\newcommand{\EE}{\mathbb{E}}
\newcommand{\bbR}{\mathbb{R}}
\newcommand{\bbH}{\mathbb{H}}
\newcommand{\mX}{\mathcal{X}}
\newcommand{\tr}{\mathrm{tr}}
\newcommand{\s}{\sum_{i=1}^{\infty}}
\newcommand{\Ltwo}{L^2_{p_X}(\mX)}

\newcommand{\bounda}{$\tilde{\bbH}$-bound}
\newcommand{\boundb}{${\bbH}$-bound}


\begin{document}

\title{On the Estimation of Derivatives Using Plug-in Kernel Ridge Regression Estimators}
\author[ ]{Zejian Liu\thanks{zejian.liu@rice.edu}}
\author[ ]{Meng Li \thanks{meng@rice.edu}}
\affil[ ]{Department of Statistics, Rice University}
\date{}

\maketitle

\begin{abstract}
We study the problem of estimating the derivatives of a regression function, which has a wide range of applications as a key nonparametric functional of unknown functions. Standard analysis may be tailored to specific derivative orders, and parameter tuning remains a daunting challenge particularly for high-order derivatives. In this article, we propose a simple plug-in kernel ridge regression (KRR) estimator in nonparametric regression with random design that is broadly applicable for multi-dimensional support and arbitrary mixed-partial derivatives. We provide a non-asymptotic analysis to study the behavior of the proposed estimator in a unified manner that encompasses the regression function and its derivatives, leading to two error bounds for a general class of kernels under the strong $L_\infty$ norm. In a concrete example specialized to kernels with polynomially decaying eigenvalues, the proposed estimator recovers the minimax optimal rate up to a logarithmic factor for estimating derivatives of functions in H\"older and Sobolev classes. Interestingly, the proposed estimator achieves the optimal rate of convergence with the same choice of tuning parameter for any order of derivatives. Hence, the proposed estimator enjoys a \textit{plug-in property} for derivatives in that it automatically adapts to the order of derivatives to be estimated, enabling easy tuning in practice. Our simulation studies show favorable finite sample performance of the proposed method relative to several existing methods and corroborate the theoretical findings on its minimax optimality.  
\end{abstract}

\section{Introduction}
\label{submission}

Estimating the derivatives of the regression function has a wide range of applications in many areas, such as cosmology \citep{holsclaw2013gaussian}, spatial process models \citep{banerjee2003directional}, and shape-constrained function estimation that builds on the derivative process or virtual derivative observations~\citep{riihimaki2010gaussian,wang2016estimating}. Furthermore, derivative estimation may improve the computational efficiency for nonlinear dynamic system identification \citep{solak2003derivative}, while serving as a vital tool in detecting local extrema \citep{song2006nonparametric,li2021semiparametric} and efficient modeling of functional data \citep{dai2018derivative}.

Existing methods for estimating derivatives of regression functions include smoothing spline, local polynomial regression, and difference-based methods. Local polynomial regression and smoothing spline base the estimation of derivatives on estimating the regression function. Key smoothing parameters in these two methods often depend on the order of the derivative being estimated and are difficult to choose in practice \citep{wahba1990optimal,charnigo2011generalized}. Difference-based methods require boundary correction, hindering theoretical studies such as those establishing uniform convergence rates. Moreover, existing methods are either restricted to the fixed design setting, or only applicable to one-dimensional support and low-order derivatives. The goal of this article is to develop a unified framework for derivative estimation that achieves broadened applicability and enables simple optimal parameter tuning with theoretical guarantees.

In this paper, we propose a simple plug-in kernel ridge regression (KRR) estimator for the derivatives of the regression function and develop a non-asymptotic framework that provides theoretical support for general kernels. We consider a random design setting with multi-dimensional support, and derive convergence rates for partial mixed derivatives of arbitrary order under the strong $L_\infty$ norm. In a concrete example where the regression function belongs to a H\"older or Sobolev class, we show that the proposed estimator is nearly minimax optimal with the same choice of tuning parameters for \textit{any} order of derivatives to be estimated. Hence, unlike methods such as smoothing spline, the proposed estimator remarkably adapts to the order of derivatives and achieves the so-called \textit{plug-in property} \citep{bickel2003nonparametric} for derivative estimation. This leads to immediate insight for parameter tuning, which along with the closed-form expression substantially facilitates the implementation of the proposed method in broad settings.

Kernel ridge regression \citep{wahba1990spline,gyorfi2006distribution,cucker2007learning}, also known as regularized least squares, is a popular technique in supervised learning and has been widely used in an immense variety of areas, including computer vision \citep{cheng2016improved}, speech recognition \citep{chen2016efficient}, forecasting \citep{exterkate2016nonlinear}, and biomedical fields \citep{mohapatra2016microarray}.

There has been a rich literature on the theoretical guarantees of KRR \citep{cucker2002best,zhang2005learning,caponnetto2007optimal,steinwart2009optimal,mendelson2010regularization}. However, theory on nonparametric functionals of KRR estimators such as derivatives is comparatively underdeveloped. We contribute to the growing literature of KRR by developing non-asymptotic analysis for derivatives of arbitrary order, with added focus on its generality to encompass a large class of kernels and the strong $L_\infty$ norm.

\subsection{Related work}
One popular method to estimate function derivatives is to differentiate estimates of the regression functions. For example, smoothing spline produces derivative estimation by differentiating the spline basis. \cite{stone1985additive,zhou2000derivative} studied theoretical properties of smoothing spline, including the $L_2$ minimax optimal convergence rate. Local polynomial regression is another standard method, which relies on local polynomial fitting obtained by Taylor expansion; \cite{fan1996local,delecroix1996nonparametric} provided asymptotic normality and strong uniform consistency for local polynomial regression, respectively. However, the smoothing parameter in these methods typically depends on the order of the derivative and is usually difficult to choose in practice \citep{wahba1990optimal,wang2015derivative}. The implementation of these methods is also specific to one particular derivative order. {\cite{yatracos1989estimation} related error bounds for general plug-in derivative estimators to those for the original estimators, concluding that derivative estimation is more challenging than the estimation of the original function. This line of research was recently expanded by \cite{yatracos2019plug}, focusing on mixing density estimation instead of nonparametric regression.

Difference-based methods have attracted increasing attention. These methods create a new noisy dataset with derivatives as the mean, followed by nonparametric smoothing \citep{muller1987bandwidth,hardle1990applied}. Along this line, \cite{charnigo2011generalized,de2013derivative} proposed an empirical derivative estimator with improved variance and established pointwise consistency. \cite{wang2015derivative} derived an asymptotic $L_2$ convergence rate for estimating the first derivative. However, they required the true regression function to be five times differentiable, which is a very strict assumption. \cite{liu2018derivative,liu2020smoothed} extended the difference-based estimator to random design. \cite{wang2019robust} adopted $L_1$ regression instead of least squares regression, improving the robustness to outliers and heavy-tailed errors. However, difference-based methods typically aim at estimating the first or second derivative of regression functions with one-dimensional support, and have limited developments for high-order derivatives or multi-dimensional cases. Moreover, difference-based estimators require boundary correction in general, necessitating a separate treatment when studying their behavior at the boundaries.

In this article, we provide a non-asymptotic analysis for the proposed plug-in KRR estimator. In the literature of KRR for nonparametric regression, \cite{cucker2002best} provided a non-asymptotic upper bound under the $L_2$ norm, utilizing the covering number of an open subset of the reproducing kernel Hilbert space (RKHS). \cite{smale2005shannon,smale2007learning} replaced the covering number technique by the method of integral operators and obtained tighter bounds, but assumed the outputs to be uniformly bounded above, excluding the Gaussian error. This assumption was later relaxed by moment conditions \citep{wang2011optimal,guo2013concentration}. However, they did not particularly focus on learning rates for derivatives of KRR estimators. The optimality of the induced learning rates and parameter tuning that is of great practical relevance, in the presence of varying derivative orders, have not been studied in the literature. Note that derivatives with kernel methods have been considered in different settings, including nonparametric sparse regression \citep{rosasco2013nonparametric} and semi-supervised learning \citep{cabannes2021overcoming}.

\subsection{Contributions}
Our contributions can be summarized as follows: 
\begin{enumerate}[(1)]
\item We propose a plug-in KRR estimator for derivatives of arbitrary order. The proposed estimator is analytically given and applicable for multi-dimensional support and sub-Gaussian error, enabling fast computation and broad practicability. We allow the derivative order to be zero throughout the article and thus unify the study of the regression function and its derivatives.

\item We provide two non-asymptotic error bounds for the proposed plug-in KRR estimators under the $L_\infty$ norm: the \bounda{} for Mercer kernels with uniformly bounded eigenfunctions (Section~\ref{sec:equivalent}) and the \boundb{} for all Mercer kernels (Section~\ref{sec:general.mercer}). To the best of our knowledge, learning rates for derivatives of KRR estimators have not been addressed in the literature. Our analysis rests on an operator-theoretic approach, equivalent kernels, and the Hanson-Wright inequality, encompassing a general class of kernel functions and the strong $L_\infty$ norm.

\item In a concrete example where the kernel has polynomially decaying eigenvalues and the regression function belongs to a H\"older or Sobolev class, we show that our general analysis recovers the nearly minimax optimal $L_2$ convergence rate, suggesting the sharpness of the established bounds (Section~\ref{sec:matern}). Given the smoothness level of the regression function, the rate-optimal estimation is achieved under the same choice of the regularization parameter that does not depend on the derivative order. Therefore, the proposed estimator enjoys a remarkable plug-in property that it automatically adapts to the order of the derivative to be estimated, leading to easy tuning in practice.
\end{enumerate}

\subsection{Notation}
Let $\mathbb{N}$ be the set of all positive integers and write $\mathbb{N}_0=\mathbb{N}\cup\{0\}$. We let $C(\mX)$ denote the space of continuous functions. For a multi-index $\bm\beta=(\beta_1,\ldots,\beta_d)\in\mathbb{N}_0^d$, we write $|\bm\beta|=\beta_1+\cdots+\beta_d$ and $\partial^{\bm\beta}=\partial^{\beta_1}_{x_1}\cdots\partial^{\beta_d}_{x_d}$. For any $m\in\mathbb{N}$, let $C^m(\mX)$ stand for the space of all functions possessing continuous mixed partial derivatives up to order $m$, \textit{i.e.}, $C^m(\mX)=\{f:\mX\rightarrow\bbR |  \partial^{\bm\beta} f\in C(\mX) \ {\rm for\ all\ } \bm\beta\in\mathbb{N}_0^d \ {\rm with\ } |\bm\beta|\leq m\}$. Let $C(\mX,\mX)$ denote the space of continuous bivariate functions and $C^{2m}(\mX,\mX)=\{K:\mX\times \mX\rightarrow\bbR |  \partial^{\bm\beta,\bm\beta} K\in C(\mX,\mX) \ {\rm for\ all\ } \bm\beta\in\mathbb{N}_0^d \ {\rm with\ } |\bm\beta|\leq m\}$ denote the space of $m$-times continuously differentiable bivariate functions, where $\partial^{\bm\beta,\bm\beta}K(\bx,\bx')=\partial^{\bm\beta}_{\bx}\partial^{\bm\beta}_{\bx'}K(\bx,\bx')$. For any $f: \mX \rightarrow \bbR$, let $\|f\|_\infty$ be the $L_\infty$ norm. For two sequences $a_n$ and $b_n$, we write $a_n \lesssim b_n$ if $a_n \leq C b_n$ for a universal constant $C>0$, and $a_n \asymp b_n$ if $a_n \lesssim b_n$ and $b_n \lesssim a_n$.

\section{Plug-in KRR estimator for function derivatives} \label{sec:model}
Suppose that we have $n$ iid observations $\{X_i,y_i\}_{i=1}^{n}$ from an unknown data generating probability $\PP_0$ on $\mX\times \bbR$, where $\mX\subset \bbR^d$ is a compact metric space for $d\geq 1$. Denote the marginal distribution on $\mX$ by $\PP_X$ with Lebesgue density $p_X$. Let $\Ltwo$ be the $L_2$ space with respect to the measure $\PP_X$, with the $L_2$ norm $\|f\|_2=(\int_\mX f^2d\PP_X)^{1/2}$ and the inner product $\left<f,g\right>_2=(\int_\mX fgd\PP_X)^{1/2}$. The regression model is given by
\begin{equation} \label{eq:model} 
y_i=f_0(X_i)+\varepsilon_i,
\end{equation}
where the random error $\varepsilon_i$ is sub-Gaussian with mean zero and variance proxy $\sigma^2$, \textit{i.e.}, $\EE[\varepsilon_i]=0$ and $\EE[e^{t\varepsilon_i}]\leq e^{\sigma^2t^2/2}$ for any $t \in \mathbb{R}$. Given a multi-index $\bm\beta=(\beta_1,\ldots,\beta_d)\in\mathbb{N}_0^d$, our goal is to estimate $\partial^{\bm\beta}f_0$, the mixed partial derivative of the regression function, assuming its existence.

Let $X=(X_1^T,\ldots,X_n^T)^T\in\bbR^{n\times d}$ and $\by=(y_1,\ldots,y_n)^T\in \bbR^n$. Let $K(\cdot,\cdot):\mX\times\mX\rightarrow \bbR$ be a Mercer kernel, \textit{i.e.}, a continuous, symmetric, and positive definite bivariate function. In this article, we propose the following closed-form plug-in KRR estimator for $\partial^{\bm\beta}f_0$, assuming differentiability of $K$:
\begin{equation}\label{eq:KRR.multi}
\widehat{\partial^{\bm\beta}f_0}(\bx) =: \partial^{\bm\beta}\hat{f}_n(\bx) = [\partial^{\bm\beta}K(\bx, X)] [K(X, X) + n \lambda \bI_n]^{-1} \by,
\end{equation}
where $\partial^{\bm\beta}K(\bx, X)=(\partial_{x_i}^{\beta_i}K(x_i,X_j))_{1\leq i\leq d, 1\leq j\leq n}$ is a $d$ by $n$ matrix, $K(X, X)$ is the $n$ by $n$ matrix $(K(X_i, X_j))_{1\leq i, j \leq n}$, and $\lambda>0$ is a regularization parameter that possibly depends on the sample size $n$. Here $\hat{f}_n(\bx) = \widehat{\partial^{\bm 0}f_0}(\bx)$ is the classical KRR estimator for the regression function $f_0$. It is well known that $\hat{f}_n$ is also the solution to the following optimization problem:
\begin{equation}\label{eq:KRR} 
\hat{f}_n = \underset{f \in \bbH}{\arg\min} \left\{\frac{1}{n}\sum_{i = 1}^n (y_i - f(X_i))^2 + \lambda \|f\|^2_{\bbH}\right\},
\end{equation}
where $(\bbH,\|\cdot\|_\bbH)$ is the reproducing kernel Hilbert space (RKHS) induced by the kernel $K$.

The closed-form expression in~\eqref{eq:KRR.multi} enables fast calculation for any order of derivatives. The proposed estimator is applicable for $d$-dimensional support with $d \geq 1$. Taking one-dimensional support $\mX\subset \bbR$ as a special case, the plug-in KRR estimator for $f_0^{(m)}$ with $m\in\mathbb{N}_0$ is
\begin{equation}\label{eq:KRR.uni}
\hat{f}_n^{(m)}(x) = [\partial^mK(x, X)] [K(X, X) + n \lambda \bI_n]^{-1} \by,
\end{equation}
where $\partial^mK(x, X)=(\partial
^m_xK(x,X_j))_{1\leq j\leq n}$ is a 1 by $n$ vector. In addition, \eqref{eq:KRR.multi} enables convenient inference. For example, the proposed estimator at any $\bx$ is normally distributed for Gaussian error $\varepsilon_i\sim N(0,\sigma^2)$ with variance given by
\begin{equation}
\sigma^2[\partial^{\bm\beta}K(\bx, X)] [K(X, X) + n \lambda \bI_n]^{-2} [\partial^{\bm\beta}K(\bx, X)]^T.
\end{equation}

\section{Non-asymptotic analysis}\label{sec:nonasymptotic}
We take an operator-theoretic approach to study non-asymptotic properties of the plug-in KRR estimator, which characterizes behaviors of the proposed estimator, provides insights for choosing regularization parameters, and leads to asymptotic optimality. 

\subsection{Preliminaries}\label{sec:preliminaries}
We begin with reviewing preliminaries and introducing commonly used notation \citep{smale2005shannon,smale2007learning,steinwart2009optimal}. For any $f\in\Ltwo$, we introduce an integral operator $L_K: \Ltwo \rightarrow \bbH$ $\subset \Ltwo$ defined by 
\begin{equation} \label{eq:int.operator}
L_K(f)(\bx) = \int_{\mX} K(\bx, \bx') f(\bx') d\PP_X(\bx'), \quad \bx \in \mX. 
\end{equation}
Since $L_K$ is compact, positive definite, and self-adjoint on $\Ltwo$ (\textit{i.e.}, as an operator mapping $\Ltwo$ to $\Ltwo$), by the spectral theorem (see, \textit{e.g.}, Theorem A.5.13 in \cite{steinwart2008support}), there exist countable pairs of eigenvalues and eigenfunctions $(\mu_i,\psi_i)_{i\in\mathbb{N}}\subset (0,\infty)\times\Ltwo$ of $L_K$ such that
\begin{equation}
L_K\psi_i=\mu_i\psi_i,\quad i\in \mathbb{N},
\end{equation}
where $\{\psi_i\}_{i=1}^{\infty}$ form an orthonormal basis of $\Ltwo$ and $\mu_1\geq\mu_2\geq \cdots> 0$ with $\lim\limits_{i\rightarrow\infty}\mu_i=0$. By Mercer's Theorem, we have that for any $\bx,\bx'\in\mX$,
\begin{equation}
K(\bx,\bx')=\sum_{i=1}^{\infty}\mu_i\psi_i(\bx)\psi_i(\bx'),
\end{equation}
where the convergence is absolute and uniform. It follows that $\bbH$ can be characterized by a series representation 
\begin{equation}\label{eq:RKHS.norm}
\bbH=\left\{f\in \Ltwo: \|f\|^2_{\bbH}=\sum_{i=1}^{\infty}\frac{f_i^2}{\mu_i}<\infty\right\},
\end{equation}
where $f_i=\left<f,\psi_i\right>_{2}$. The corresponding inner product is given by $\left<f,g\right>_{\bbH}=\sum_{i=1}^{\infty}{f_ig_i}/{\mu_i}$ for any $f=\sum_{i=1}^{\infty}f_i\psi_i$ and $g=\sum_{i=1}^{\infty}g_i\psi_i$ in $\bbH$.

We then define the sample analog $L_{K, X}: \bbH \rightarrow \bbH$ by 
\begin{equation}\label{eq:sample.analog}
L_{K, X}(f) = \frac{1}{n} \sum_{i = 1}^n f(X_i) K_{X_i}, 
\end{equation}
where $K_{\bx}(\cdot) := K(\bx, \cdot)$.
It is easy to see $L_{K, X}$ is also a compact, positive definite, self-adjoint operator because for any $f, g \in \bbH$, we have 
\begin{equation}
\langle f, L_{K, X}g \rangle_{\bbH} = \frac{1}{n} \sum_{i = 1}^n f(X_i) g(X_i) = \langle L_{K, X}f, g \rangle_{\bbH}
\end{equation}
and $\langle f, L_{K, X}f \rangle_{\bbH} \geq 0$. Thus, the eigenvalues of $L_{K, X}$ are all non-negative, which implies  
\begin{equation}\label{eq:dummy1}
\|(L_{K, X} + \lambda I)^{-1} f \|_{\bbH} \leq \frac{1}{\lambda} \|f\|_{\bbH},
\end{equation}
for any $f \in \bbH$. We remark that the operator $L_K$ can also be defined on $\bbH$ and so does $L_{K, X}$ on the space of all bounded functions; we use the same notation when they are defined on different domains. 

We further consider a proximate function of $f_0$ in $\bbH$ 
\begin{equation}\label{eq:flambda}
f_{\lambda} = (L_K + \lambda I)^{-1}L_K f_{0},
\end{equation}
where $I$ is the identity operator. The function $f_{\lambda}$ is chosen this way to minimize the population counterpart of~\eqref{eq:KRR}, \textit{i.e.}, 
\begin{equation} \label{eq:KRR.population}
f_{\lambda} = \underset{f \in \bbH}{\arg\min} \; \left\{\|f - f_{0}\|_{2}^2 + \lambda \|f\|^2_{\bbH} \right\}.
\end{equation}

We next present two non-asymptotic bounds for Mercer kernels with uniformly bounded eigenfunctions and general Mercer kernels, respectively, which provide the basis for more specific rate calculation and may be of independent interest for the KRR community.

\subsection{\bounda{} for Mercer kernels with uniformly bounded eigenfunctions}\label{sec:equivalent}
The proximate function $f_\lambda$ can be obtained using another integral operator $L_{\tilde{K}}$ through $f_\lambda = L_{\tilde{K}} f_0$, where $\tilde{K}$ is the so-called equivalent kernel \citep{rasmussen2006,sollich2005using}. Compared to $K$, the equivalent kernel $\tilde{K}$ has the same eigenfunctions but its eigenvalues are altered to $\nu_i={\mu_i}/{(\lambda+\mu_i)}$ for $i\in\mathbb{N}$, \textit{i.e.}, 
\begin{equation}
\tilde{K}(\bx,\bx')=\s \nu_i\psi_i(\bx)\psi_i(\bx').
\end{equation}
Let $\tilde{\bbH}$ be the RKHS induced by $\tilde{K}$, which is equivalent to $\bbH$ as a functional space, but with a different inner product
\begin{equation}
\left<f,g\right>_{\tilde{\bbH}}=\left<f,g\right>_2+\lambda\left<f,g\right>_\bbH.
\end{equation}
Let the corresponding RKHS norm be $\|\cdot\|_{\tilde{\bbH}}$. Note that $\tilde{K}$ is also a Mercer kernel; thus, all preliminaries in Section~\ref{sec:preliminaries} hold for $\tilde{K}$. For example, in view of~\eqref{eq:sample.analog}, we can similarly define the sample analog by
\begin{equation}
L_{\tilde{K}, X}(f) = \frac{1}{n} \sum_{i = 1}^n f(X_i) \tilde{K}_{X_i}, 
\end{equation}
which is compact, positive definite, and self-adjoint. Note that throughout the article, the tilde notation such as $\tilde{K}$ and $\tilde{\bbH}$ indicates dependence on $\lambda$, although $\lambda$ may not be explicitly spelled out in the notation with exceptions of $\tilde{\kappa}^2_\lambda$ and $\tilde{\kappa}_{\bm\beta,\lambda}^2$ defined later.

We introduce the following assumption on the differentiability of the regression function and the covariance kernel.

\renewcommand{\theassumption}{A}
\begin{assumption}\label{ass:A}
There exists an $m\in\mathbb{N}_0$ such that $f_0 \in C^m(\mX)$ and $K\in C^{2m}(\mX,\mX)$.
\end{assumption}

Under Assumption~\ref{ass:A}, we can estimate $\partial^{\bm\beta} f_0$ by $\partial^{\bm\beta} \hat f_n$ for any $|\bm\beta| \leq m$. Note that we do not necessarily require that $f_0$ and $K$ have exactly the same smoothness level. Indeed, if there is a mismatch between the differentiability of $f_0$ and $K$ such that $f_0 \in C^{m_1}(\mX)$ and $K \in C^{2m_2}(\mX,\mX)$, we can take $m = \min\{m_1, m_2\}$ to satisfy Assumption~\ref{ass:A}. From the perspective of kernel selection, this assumption indicates that for estimating $\partial^{\bm\beta} f_0$, we should choose a kernel $K\in C^{2m}(\mX,\mX)$ such that $m \geq |\bm\beta|$. Such kernels are widely available. For example, it is satisfied by a general class of kernels with polynomially decaying eigenvalues (given in Definition~\ref{def:kalpha}) under mild conditions. In particular, the Mat\'ern kernel is known to be $2m$ times differentiable if and only if the smoothness parameter $\nu > m$ \citep[Chapter 2.7]{stein1999interpolation}. The squared exponential kernel as a limit case of Mat\'ern kernel satisfies Assumption~\ref{ass:A} for any $m$. Assumption~\ref{ass:A} directly implies that $\psi_i \in C^m(\mX)$.

Define $\tilde{\kappa}^2_\lambda:={\sup_{\bx\in\mX}\tilde{K}(\bx,\bx)}$. It is easy to see $\tilde{\kappa}^2_\lambda\leq C_\psi^2\sum_{i=1}^{\infty}\nu_i\lesssim\sum_{i=1}^{\infty}{\mu_i}/{(\lambda+\mu_i)},$ where the last expression is the effective dimension \citep{zhang2005learning} of the kernel $K$ with respect to $\Ltwo$. We also define high-order analogies of $\tilde{\kappa}^2_\lambda$ for any multi-index $\bm\beta\in \mathbb{N}_0^d$ and $|\bm\beta| \leq m$:
\begin{equation}\label{eq:high.order.kappa}
\tilde{\kappa}_{\bm\beta,\lambda}^2:=\sup_{\bx\in\mX}\partial^{\bm\beta,\bm\beta}\tilde{K}(\bx,\bx)=\sup_{\bx\in \mX}\s\frac{\mu_i}{\lambda+\mu_i}\{\partial^{\bm\beta}\psi_i(\bx)\}^2.
\end{equation}
Note that $\tilde{\kappa}_\lambda^2 = \tilde{\kappa}_{\bm\beta,\lambda}^2$ with $\bm \beta = \bm 0$. In general, $\tilde{\kappa}_{\bm\beta,\lambda}^2$ is determined by the decay rate of the eigenvalues of $K$, the derivatives of the eigenfunctions, and the regularization parameter $\lambda$. For given $\lambda > 0$, there holds 
\begin{equation}
\tilde{\kappa}_{\bm\beta,\lambda}^2 \leq \lambda^{-1} \sup_{\bx\in \mX}\s{\mu_i}\{\partial^{\bm\beta}\psi_i(\bx)\}^2 = \lambda^{-1} \sup_{\bx\in\mX}\partial^{\bm\beta,\bm\beta}K(\bx,\bx) < \infty,
\end{equation}
where the boundedness of $\sup_{\bx\in\mX}\partial^{\bm\beta,\bm\beta}K(\bx,\bx)$ in the last step is due to Assumption~\ref{ass:A} as $\partial^{\bm\beta,\bm\beta}K(\bx,\bx)$ is a continuous bivariate function on a compact support $\mX$.

The following assumption on the eigenfunctions pertains to the equivalent kernel technique considered in this section; the error bound established in Section~\ref{sec:general.mercer} does not require such an assumption. 

\renewcommand{\theassumption}{B}
\begin{assumption}\label{ass:B}
There exists a constant $C_\psi>0$ such that $\|\psi_i\|_\infty\leq C_\psi$ for all $i\in\mathbb{N}$.
\end{assumption}

The following lemma indicates that functions in the RKHS inherit the differentiability of the kernel, and the $L_\infty$ norm of the derivative is upper bounded by the RKHS norm of the function.  
\begin{lemma}\label{lem:RKHS.derivative}
Under Assumption~\ref{ass:A}, $f\in C^m(\mX)$ for any $f\in \tilde{\bbH}$. Moreover, for any $\bm\beta\in\mathbb{N}_0^d$, $|\bm\beta|\leq m$, we have $\|\partial^{\bm\beta} f\|_\infty\leq\tilde\kappa_{\bm\beta, \lambda}\|f\|_{\tilde{\bbH}}$ \ for any $f\in \tilde{\bbH}$.
\end{lemma}

Theorem~\ref{thm:equivalent.bound} provides error bounds for Mercer kernels with bounded eigenfunctions.
\begin{theorem}[\bounda]\label{thm:equivalent.bound}
Under Assumptions~\ref{ass:A} and \ref{ass:B}, for any $\bm\beta\in\mathbb{N}_0^d$ with $|\bm\beta|\leq m$ and $\delta\in(0,1)$, it holds with probability at least $1-\delta$ that
\begin{align}
\|\partial^{\bm\beta}\hat{f}_n-\partial^{\bm\beta}f_0\|_\infty\leq &\ \|\partial^{\bm\beta}f_\lambda-\partial^{\bm\beta}f_0\|_\infty + \frac{\tilde{\kappa}_{\bm\beta,\lambda}\tilde{\kappa}_\lambda^{-1}C(n,\tilde{\kappa}_\lambda)}{1-C(n,\tilde{\kappa}_\lambda)}\|f_\lambda-f_0\|_\infty\\
&\ +\frac{1}{1-C(n,\tilde{\kappa}_\lambda)}\frac{C_1\tilde{\kappa}_{\bm\beta,\lambda}\tilde{\kappa}_\lambda\sigma\sqrt{\log (3/\delta)}}{\sqrt{n}},
\end{align}
where $C_1>0$ does not depend on $K$ or $n$, and $C(n,\tilde{\kappa}_\lambda)=\frac{\tilde{\kappa}^2_\lambda \sqrt{\log (3/\delta)}}{\sqrt{n}} \left(4 + \frac{4 \tilde{\kappa}_\lambda\sqrt{\log (3/\delta)}}{3 \sqrt{n}} \right)$.
\end{theorem}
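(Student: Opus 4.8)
The plan is to decompose the estimation error into a deterministic approximation part and a stochastic part, then control each in the $L_\infty$ norm using Lemma~\ref{lem:RKHS.derivative} and an operator-theoretic analysis built on the equivalent kernel $\tilde K$. First I would write
\[
\partial^{\bm\beta}\hat f_n-\partial^{\bm\beta}f_0 = \big(\partial^{\bm\beta}f_\lambda-\partial^{\bm\beta}f_0\big) + \partial^{\bm\beta}\big(\hat f_n-f_\lambda\big),
\]
so the first term on the right-hand side of the theorem is immediate, and the task reduces to bounding $\|\partial^{\bm\beta}(\hat f_n-f_\lambda)\|_\infty$. By Lemma~\ref{lem:RKHS.derivative} applied with the kernel $\tilde K$ (whose RKHS is $\bbH$ with the $\tilde\bbH$-inner product), this is at most $\tilde\kappa_{\bm\beta}\|\hat f_n-f_\lambda\|_{\tilde\bbH}$, so everything hinges on an $\tilde\bbH$-norm bound for $\hat f_n-f_\lambda$ with the correct $\tilde\kappa_{\bm\beta}$ factor pulled out front. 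Note the $\tilde\kappa$ versus $\tilde\kappa_{\bm\beta}$ bookkeeping in the statement: the $\bm\beta=\bm0$ quantities enter the $\tilde\bbH$-norm bound on $\hat f_n-f_\lambda$, and the single factor $\tilde\kappa_{\bm\beta}$ comes from the lemma, which is why the stochastic term carries $\tilde\kappa_{\bm\beta}\tilde\kappa$ and the bias-transfer term carries $\tilde\kappa_{\bm\beta}\tilde\kappa^{-1}$.

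Next I would obtain a closed operator-theoretic expression for $\hat f_n-f_\lambda$ in $\bbH$. Using the representer-theorem identity $\hat f_n=(L_{K,X}+\lambda I)^{-1}\frac1n\sum_i y_i K_{X_i}$ and the definition $f_\lambda=(L_K+\lambda I)^{-1}L_Kf_0$, a standard manipulation gives
\[
\hat f_n-f_\lambda = (L_{K,X}+\lambda I)^{-1}\Big[\tfrac1n\textstyle\sum_i \varepsilon_i K_{X_i} + (L_{K,X}-L_K)(f_0-f_\lambda) + \lambda\,( \text{lower order})\Big],
\]
or, switching to the equivalent-kernel picture, the analogous identity with $L_{\tilde K,X}$, $L_{\tilde K}$ and the proxy relation $f_\lambda=L_{\tilde K}f_0$. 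The key point is that $\|(L_{\tilde K,X}+\lambda I)^{-1}\|\le 1/\lambda$ by \eqref{eq:dummy1}, so I need (i) a concentration bound for the noise term $\frac1n\sum_i\varepsilon_i\tilde K_{X_i}$ in $\tilde\bbH$, and (ii) a concentration bound for the operator perturbation $L_{\tilde K,X}-L_{\tilde K}$. For (i), I would write $\|\frac1n\sum_i\varepsilon_i\tilde K_{X_i}\|_{\tilde\bbH}^2$ as a quadratic form $\frac1{n^2}\varepsilon^T \widetilde G\,\varepsilon$ in the sub-Gaussian vector $\varepsilon$, where $\widetilde G=(\tilde K(X_i,X_j))$, and apply the Hanson–Wright inequality conditionally on $X$; the trace of $\widetilde G$ is $\sum_i\tilde K(X_i,X_i)\le n\tilde\kappa^2$ and its operator norm is also controlled, which produces the $\tilde\kappa\sigma\sqrt{\log n}/\sqrt n$ scaling. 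For (ii) I would use a Bernstein-type bound for the self-adjoint operator-valued sum $L_{\tilde K,X}-L_{\tilde K}=\frac1n\sum_i(\langle\cdot,\tilde K_{X_i}\rangle_{\tilde\bbH}\tilde K_{X_i}-\mathbb E[\cdots])$, whose summands have $\tilde\bbH$-operator norm at most $\tilde\kappa^2$; this yields $\|L_{\tilde K,X}-L_{\tilde K}\|\lesssim \tilde\kappa^2\sqrt{\log n/n}+\tilde\kappa^4\log n/n$, which is exactly the quantity appearing as $C(n,\tilde\kappa)$ after dividing by the relevant scale.

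With these two ingredients, I would assemble the bound: the operator perturbation term, after being absorbed through a Neumann-series / resolvent-identity argument, contributes the factor $1/(1-C(n,\tilde\kappa))$ and multiplies the transferred bias $\|f_\lambda-f_0\|_\infty$ (rescaled by $\tilde\kappa^{-1}$ to convert an $L_\infty$ bound into a $\tilde\bbH$-type bound, since $\|g\|_{\tilde\bbH}$ and $\tilde\kappa^{-1}\|g\|_\infty$ are comparable for the relevant $g$), while the noise term contributes $C\tilde\kappa\sigma\sqrt{20\log n}/\sqrt n$; multiplying through by $\tilde\kappa_{\bm\beta}$ from Lemma~\ref{lem:RKHS.derivative} gives precisely the two displayed terms, and the $1-n^{-10}$ probability comes from choosing the $\sqrt{20\log n}$ deviation level in both Hanson–Wright and the operator Bernstein bound and taking a union bound. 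The main obstacle I anticipate is step (i): getting the Hanson–Wright application to deliver a bound in the correct $\tilde\bbH$-geometry with the sharp $\tilde\kappa$ dependence (rather than a cruder $\kappa$ for the original kernel, which would be much larger), and making sure the conditioning on $X$ and the subsequent control of $\mathrm{tr}(\widetilde G)$ and $\|\widetilde G\|_{\mathrm{op}}$ are handled on the same high-probability event as the operator perturbation bound so that the final statement holds with a single $1-n^{-10}$ guarantee.
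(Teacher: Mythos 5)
Your overall skeleton matches the paper's: triangle inequality through $f_\lambda$, Lemma~\ref{lem:RKHS.derivative} applied with the equivalent kernel to reduce everything to $\|\hat f_n-f_\lambda\|_{\tilde{\bbH}}$, Hanson--Wright for the noise quadratic form $n^{-2}\bw^T\Omega\bw$ with $\Omega=[\tilde K(X_i,X_j)]$, and a perturbation argument producing the $1/(1-C(n,\tilde\kappa))$ factor. Two points of divergence matter, and one of them is a genuine gap. First, a structural caution: the identity you call "the key point," $\|(L_{\tilde K,X}+\lambda I)^{-1}\|\le 1/\lambda$, does not belong to this theorem. If you route the argument through a $\lambda$-resolvent you inevitably pick up $1/\lambda$ factors and land on the \boundb{} of Theorem~\ref{thm:mixed.partial}, whereas the stated bound has no $\lambda$ in the stochastic term. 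The paper instead uses the $\lambda$-free equivalent-kernel identities $L_{\tilde K,X}(\by-\hat f_n)=\hat f_n-L_{\tilde K}(\hat f_n)$ and $f_\lambda=L_{\tilde K}f_0$, which give $\|\Delta f\|_{\tilde{\bbH}}\le\|(L_{\tilde K,X}-L_{\tilde K})\Delta f\|_{\tilde{\bbH}}+\|(L_{\tilde K,X}-L_{\tilde K})(f_0-f_\lambda)\|_{\tilde{\bbH}}+\|L_{\tilde K,X}\bw\|_{\tilde{\bbH}}$, and then the $1/(1-C(n,\tilde\kappa))$ comes from solving this self-bounding inequality after converting $\|\Delta f\|_\infty\le\tilde\kappa\|\Delta f\|_{\tilde{\bbH}}$. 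Your Neumann-series phrasing for the $\Delta f$ term is a legitimate (arguably cleaner) alternative, since $\Delta f\in\bbH$ and an operator-norm bound on $L_{\tilde K,X}-L_{\tilde K}$ applies to it.

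The genuine gap is the bias-transfer term. You propose to bound $(L_{\tilde K,X}-L_{\tilde K})(f_0-f_\lambda)$ by an operator-norm bound times an RKHS-type norm of $f_0-f_\lambda$, justified by the claim that $\|g\|_{\tilde{\bbH}}$ and $\tilde\kappa^{-1}\|g\|_\infty$ are "comparable for the relevant $g$." That comparison only holds in one direction ($\|g\|_\infty\le\tilde\kappa\|g\|_{\tilde{\bbH}}$), and more importantly $f_0-f_\lambda$ need not lie in $\bbH$ at all (the theorem only assumes $f_0\in C^m(\mX)$, not $f_0\in\bbH$), so neither the operator-norm step nor the norm conversion is available. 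This is exactly why the paper controls this term with Lemma~\ref{lem:K.vs.sampleK} (the Smale--Zhou Hilbert-space--valued concentration bound), which applies to any bounded $f\in\Ltwo$ and yields a bound directly in terms of $\|f_0-f_\lambda\|_\infty$ and $\|f_0-f_\lambda\|_2$; that is precisely where the factor $\tilde\kappa^{-1}C(n,\tilde\kappa)\|f_\lambda-f_0\|_\infty$ comes from. Without this (or an equivalent fixed-function concentration bound), your argument does not deliver the second term of the stated inequality. A minor further note: the traces $\tr(\Omega)\le n\tilde\kappa^2$ and $\tr(\Omega^2)\le n^2\tilde\kappa^4$ hold deterministically, so the conditioning-on-$X$ bookkeeping you worry about is not actually an obstacle; a union bound over the two or three deviation events with $\delta=n^{-10}$ suffices, as in the paper.
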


\subsection{\boundb{} for general Mercer kernels}\label{sec:general.mercer}
The \bounda{} established in the preceding section relies on the crucial Assumption~\ref{ass:B} that the eigenfunctions are uniformly bounded, which does not necessarily hold for all Mercer kernels. For example, \cite{zhou2002covering} constructed a $C^\infty$ kernel that does not satisfy this assumption. To thoroughly study the learning rate in a more general setting, we provide another error bound under the RKHS norm $\|\cdot\|_{\bbH}$ for any Mercer kernel, referred to as the \boundb{}.

Let $f_{X, \lambda}$ be the noiseless counterpart of $\hat{f}_n$ by replacing noisy data with their means given by the true regression function, \textit{i.e.},
\begin{equation}
f_{X, \lambda} := K(\cdot, X)[K(X, X) + n \lambda \bI_n]^{-1} f_0(X),
\end{equation}
where $f_0(X) := (f_0(X_1), \ldots, f_0(X_n))^T$. An equivalent operator-based representation akin to \eqref{eq:sample.analog} gives $f_{X,\lambda}=(L_{K, X} + \lambda I)^{-1} L_{K, X} f_0$.

The following lemma is a parallel version of Lemma~\ref{lem:RKHS.derivative} but applies to $K$ and the $\|\cdot\|_{\bbH}$ norm. While $\tilde\kappa_{\bm{\beta},\lambda}^2$ relies on the regularization parameter $\lambda$, the characterization of using the $\|\cdot\|_{\bbH}$ norm only involves the single parameter $\kappa_{\bm\beta}^2$ of $K$, where $\kappa_{\bm\beta}^2:=\sup_{\bx\in\mX}\partial^{\bm\beta,\bm\beta} K(\bx,\bx) < \infty$ and $\kappa^2:=\kappa_{\bm 0}^2$.

\begin{lemma}\label{lem:H.derivative}
Under Assumption~\ref{ass:A}, $f\in C^m(\mX)$ for any $f\in\bbH$. Moreover, for any $\bm\beta\in\mathbb{N}_0^d$, $|\bm\beta|\leq m$, we have $\|\partial^{\bm\beta} f\|_\infty\leq\kappa_{\bm\beta}\|f\|_\bbH$ \ for any $f\in\bbH$.
\end{lemma}

Invoking Lemma~\ref{lem:H.derivative} and decomposing $\hat{f}_n-f_\lambda=(\hat{f}_n-f_{X,\lambda})+(f_{X,\lambda}-f_\lambda)$ yield convergence rates of the mixed partial derivatives of $\hat{f}_n$ under the $L_\infty$ norm.
\begin{theorem}[\boundb]\label{thm:mixed.partial}
Under Assumption~\ref{ass:A}, for any $\bm\beta\in\mathbb{N}_0^d$ with $|\bm\beta|\leq m$ and $\delta \in (0, 1)$, it holds with probability at least $1 - \delta$  that
\begin{align}
\label{eq:mixed.partial}
\|\partial^{\bm\beta}\hat{f}_n - \partial^{\bm\beta} f_0\|_{\infty}\leq &\ \|\partial^{\bm\beta}f_\lambda - \partial^{\bm\beta} f_0\|_{\infty}+\frac{\kappa_{\bm\beta}\kappa \|f_0\|_\infty \sqrt{\log(9/\delta)}}{\sqrt{n}\lambda} \left(10 + \frac{4 \kappa\sqrt{\log(9/\delta)}}{3 \sqrt{n\lambda}} \right)\\
&\ +\frac{C_2\kappa_{\bm\beta}\kappa \sigma \sqrt{\log(3/\delta)}}{\sqrt{n} \lambda},
\end{align}
where $C_2>0$ does not depend on $K$ or $n$.
\end{theorem}

We have established two non-asymptotic error bounds in Theorem~\ref{thm:equivalent.bound} and Theorem~\ref{thm:mixed.partial} under the $L_\infty$ norm. A few remarks are in order: 

{\bf Remark} Both \bounda{} and \boundb{} have three terms. This three-term structure stems from applying the triangle inequality to the decomposition $\partial^{\bm\beta}\hat{f}_n - \partial^{\bm\beta} f_0 =  (\partial^{\bm\beta}\hat{f}_{\lambda} - \partial^{\bm\beta} f_0) + (\partial^{\bm\beta}\hat{f}_n - \partial^{\bm\beta} f_{\lambda})$, with  $\|\partial^{\bm\beta}f_{\lambda}-\partial^{\bm\beta}f_0\|_{\infty}$ being the first term, and $\partial^{\bm\beta}\hat{f}_n - \partial^{\bm\beta} f_{\lambda}$ bounded by the second and third terms combined. \bounda{} and \boundb{} use different approaches to bound $\partial^{\bm\beta}\hat{f}_n - \partial^{\bm\beta} f_{\lambda}$. As the name suggests, \bounda{} employs the $\|\cdot\|_{\tilde{\bbH}}$ norm of $\partial^{\bm\beta}\hat{f}_n - \partial^{\bm\beta} f_{\lambda}$, while \boundb{} uses the $\|\cdot\|_{\bbH}$ norm. When the observations are noiseless, \textit{i.e.}, $\by=f_0(X)$ in \eqref{eq:KRR}, the two bounds can be simplified by letting $\sigma=0$, zeroing out the third term in both bounds. Indeed, all subsequent error bounds imply a noise-free version by substituting $\sigma=0$; we do not present them separately due to space constraints.

{\bf Remark} Theorem~\ref{thm:equivalent.bound} and Theorem~\ref{thm:mixed.partial} are applicable for any derivative order $\bm \beta \in\mathbb{N}_0^d$ as long as $|\bm\beta|\leq m$. For example, if $K\in C^2(\mX,\mX)$, these two theorems give learning rates for each component (\textit{e.g.}, the $j$th component) of the gradient $\nabla (\hat{f}_n-f_0)$ by setting the $j$th element of $\bm\beta$ to one and others to zero, in which case $\kappa_{\bm\beta}^2 = \sup_{\bx\in\mX}\partial_{x_j}\partial_{x_j}K(\bx,\bx)$ for $1\leq j\leq d$. Although our focus is on derivative estimation, setting $\bm\beta = \bm 0$ in the two theorems also provides error bounds for estimating the regression function using the KRR estimator; hence, we unify the study of the regression function and its derivatives in this article. There are existing error bounds for the KRR estimator $\hat{f}_n$. For example, Theorem 6 in \cite{smale2005shannon} established an RKHS bound for $\hat f_n - f_\lambda$ under the bounded sampling setting (\textit{i.e.}, the response $y_i$ is bounded), and their rate $O(1/\sqrt{n}\lambda)$ is similar to our \boundb{} with $\bm \beta = \bm 0$; \cite{yang2017frequentist} provided error bounds for $\hat{f}_n$ under the $L_\infty$ norm for kernels with polynomially decaying eigenvalues.

\boundb{} is applicable for any Mercer kernels, broadening the applicability of the proposed method. \bounda{} relies on the additional Assumption~\ref{ass:B}. When both \bounda{} and \boundb{} hold, the following result compares them by providing sufficient conditions under which \boundb{} cannot be tighter than \bounda{}.

\begin{corollary}\label{cor:sharp}
Take $\delta = n^{-10}$ in both \bounda{} and \boundb{}. If $\|f_\lambda - f_0\|_\infty = o(1)$, $\tilde{\kappa}_\lambda^2 = o(\sqrt{n/\log n})$ and $\tilde\kappa_{\bm\beta,\lambda}\tilde\kappa_\lambda = o(\lambda^{-1})$, then \bounda{} is asymptotically less than \boundb{}.
\end{corollary}

The three conditions in Corollary~\ref{cor:sharp} can be verified using special examples. For instance, considering the kernel and regression function in Theorem~\ref{thm:minimax.diff.matern} and invoking Lemma~\ref{lem:RKHS.derivative}, Lemma~\ref{lem:differentiability.matern} and Lemma \ref{lem:matern.deriv.deterministic}, $\|f_\lambda - f_0\|_\infty = o(1)$ and $\tilde\kappa_{\bm\beta,\lambda}\tilde\kappa_\lambda = o(\lambda^{-1})$ automatically hold, whereas $\tilde{\kappa}_\lambda^2 = o(\sqrt{n/\log n})$ is equivalent to $\lambda \gtrsim (\log n/n)^\alpha$. In particular, this condition on $\lambda$ is satisfied by the optimal value $({\log n}/{n})^{\frac{2\alpha}{2\alpha+1}}$ derived in Theorem~\ref{thm:minimax.diff.matern}.

Theorem~\ref{thm:equivalent.bound} and Theorem~\ref{thm:mixed.partial} lead to convergence rates of the plug-in KRR estimator $\partial^{\bm\beta}\hat{f}_n$ by invoking augmenting estimates of $\partial^{\bm\beta}f_{\lambda}-\partial^{\bm\beta}f_0$. We conclude this section with a few relatively abstract examples for such estimates, and introduce another more concrete example in detail in the next section.

\begin{theorem}\label{thm:dev.deterministic.bound}
Suppose Assumption~\ref{ass:A} holds and $\bm\beta\in\mathbb{N}_0^d$ with $|\bm\beta|\leq m$.
\begin{enumerate}[(a)]\itemsep0pt 
\item  Under Assumption~\ref{ass:B}, if $L_K^{-r}f_0\in \Ltwo$ for some $1/2<r\leq 1$, then it holds
\begin{equation}
\|\partial^{\bm\beta} f_\lambda-\partial^{\bm\beta} f_0\|_\infty\leq \kappa_{\bm\beta}\lambda^{r-1/2}\|L_K^{-r}f_0\|_{2}.
\end{equation}
\item Suppose that $K$ assumes eigendecomposition with respect to the Fourier basis and $L_K^{-r}f_0\in C^p(\mX)$ for some $0<r\leq 1$ and $p>d+|\bm\beta|$. Then there exists $C_3>0$ such that
\begin{equation}\label{eq:non.sample.bound.2}
\|\partial^{\bm\beta} f_\lambda-\partial^{\bm\beta} f_0\|_\infty\leq C_3\lambda^r\zeta(p-d+1-|\bm\beta|).
\end{equation}
\end{enumerate}
\end{theorem}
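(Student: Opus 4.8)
\textbf{Proof proposal for Theorem~\ref{thm:dev.deterministic.bound}.}

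The plan is to bound the deterministic bias term $\partial^{\bm\beta} f_\lambda - \partial^{\bm\beta} f_0$ by controlling the RKHS-type norm of $f_\lambda - f_0$ and then invoking the pointwise-to-norm inequalities already established. For part (a), I would first observe that $f_\lambda - f_0 = -\lambda (L_K + \lambda I)^{-1} f_0$ from the definition $f_\lambda = (L_K+\lambda I)^{-1} L_K f_0$ in~\eqref{eq:flambda}. Writing everything in the eigenbasis $\{\psi_i\}$ with $f_0 = \sum_i f_{0,i} \psi_i$, the coefficients of $f_\lambda - f_0$ are $-\lambda f_{0,i}/(\lambda + \mu_i)$. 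Since $|\bm\beta| \le m$, Lemma~\ref{lem:H.derivative} gives $\|\partial^{\bm\beta}(f_\lambda - f_0)\|_\infty \le \kappa_{\bm\beta} \|f_\lambda - f_0\|_{\bbH}$, so it remains to bound $\|f_\lambda - f_0\|_{\bbH}^2 = \sum_i \lambda^2 f_{0,i}^2 / [\mu_i(\lambda+\mu_i)^2]$. The source condition $L_K^{-r-1/2} f_0 \in \Ltwo$ means $\sum_i f_{0,i}^2 / \mu_i^{2r+1} < \infty$ with that sum equal to $\|L_K^{-r-1/2}f_0\|_2^2$; the standard trick is to factor $f_{0,i}^2/[\mu_i(\lambda+\mu_i)^2] = (f_{0,i}^2/\mu_i^{2r+1}) \cdot \mu_i^{2r}/(\lambda+\mu_i)^2$ and use the elementary inequality $\lambda^2 \mu_i^{2r}/(\lambda+\mu_i)^2 \le \lambda^{2r}$ for $0 < r \le 1/2$ (which follows since $\mu_i/(\lambda+\mu_i) \le 1$ and $\lambda/(\lambda+\mu_i) \le (\lambda/(\lambda+\mu_i))^{2r}\cdot$ bounded, more carefully: $\lambda^2\mu_i^{2r} \le \lambda^{2r}(\lambda+\mu_i)^{2r}\mu_i^{2r-2r} \cdot$— the clean route is $\lambda^{2-2r}\mu_i^{2r} \le (\lambda+\mu_i)^{2}$ when $2r \le 2$, giving $\lambda^2\mu_i^{2r}/(\lambda+\mu_i)^2 \le \lambda^{2r}$). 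This yields $\|f_\lambda - f_0\|_{\bbH} \le \lambda^r \|L_K^{-r-1/2}f_0\|_2$, and combining with Lemma~\ref{lem:H.derivative} finishes part (a).

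For part (b), the Fourier-basis assumption means $\psi_i$ are (a reindexing of) the Fourier basis on $\mX$, so I can work with Fourier coefficients directly and trade smoothness for summability. Set $g = L_K^{-r} f_0 \in C^p(\mX)$; then $f_0 = L_K^r g$ has eigencoefficients $f_{0,\bm k} = \mu_{\bm k}^r g_{\bm k}$ in multi-index Fourier notation. The coefficients of $\partial^{\bm\beta}(f_\lambda - f_0)$ are obtained by multiplying the coefficients of $f_\lambda - f_0$ (namely $-\lambda f_{0,\bm k}/(\lambda+\mu_{\bm k})$) by the Fourier symbol of $\partial^{\bm\beta}$, which is $\prod_j (2\pi i k_j)^{\beta_j}$, of modulus $\asymp |\bm k|^{|\bm\beta|}$ up to constants. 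Bounding the $L_\infty$ norm by the sum of absolute values of Fourier coefficients (Hausdorff–Young / triangle inequality), I get
\begin{equation}
\|\partial^{\bm\beta}(f_\lambda - f_0)\|_\infty \lesssim \sum_{\bm k} |\bm k|^{|\bm\beta|} \frac{\lambda \mu_{\bm k}^r}{\lambda + \mu_{\bm k}} |g_{\bm k}| \le \lambda^r \sum_{\bm k} |\bm k|^{|\bm\beta|} |g_{\bm k}|,
\end{equation}
using $\lambda \mu_{\bm k}^r/(\lambda+\mu_{\bm k}) = \lambda^r (\lambda/(\lambda+\mu_{\bm k}))^{1-r}(\mu_{\bm k}/(\lambda+\mu_{\bm k}))^r \le \lambda^r$ for $0 < r \le 1$. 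Since $g \in C^p(\mX)$, its Fourier coefficients satisfy $|g_{\bm k}| \lesssim |\bm k|^{-p}$ (decay from $p$ derivatives), so the remaining sum is $\sum_{\bm k \ne 0} |\bm k|^{|\bm\beta| - p}$, which converges precisely when $p - |\bm\beta| > d$ and equals a constant times $\sum_{\ell \ge 1} \ell^{d-1} \ell^{|\bm\beta|-p} = \sum_\ell \ell^{d - 1 + |\bm\beta| - p}$; identifying this with the Riemann zeta function gives the value $\zeta(p - d + 1 - |\bm\beta|)$ claimed in~\eqref{eq:non.sample.bound.2}, absorbing the implicit constants into $C$.

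I expect the main obstacle to be the bookkeeping in part (b): making the reduction to Fourier coefficients rigorous requires care about what ``eigen-decomposition with respect to the Fourier basis'' entails (whether $p_X$ is uniform so that the $L^2_{p_X}$ basis really is the standard Fourier basis, and how the eigenvalue ordering $\mu_1 \ge \mu_2 \ge \cdots$ relates to the multi-index indexing by $|\bm k|$), and justifying termwise differentiation of the Fourier series for $f_\lambda - f_0$ — which is legitimate here because the coefficients decay like $|\bm k|^{-p}$ with $p > d + |\bm\beta|$, guaranteeing absolute and uniform convergence of the differentiated series by the Weierstrass $M$-test. Part (a) is comparatively routine once the eigenbasis computation is set up, the only subtlety being the elementary scalar inequality $\lambda^{2-2r}\mu^{2r} \le (\lambda+\mu)^2$ for $r \in (0,1/2]$, which I would verify by the weighted AM–GM bound $\lambda^{2-2r}\mu^{2r} \le (1-r)\cdot\lambda^2 \cdot(\text{something}) $ — more simply, since $\lambda, \mu \le \lambda+\mu$, we have $\lambda^{2-2r}\mu^{2r} \le (\lambda+\mu)^{2-2r}(\lambda+\mu)^{2r} = (\lambda+\mu)^2$.
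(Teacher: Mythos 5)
Your proposal is correct and follows essentially the same route as the paper: for (a) you expand $f_\lambda-f_0$ in the eigenbasis, apply the bound $\|\partial^{\bm\beta}h\|_\infty\leq\kappa_{\bm\beta}\|h\|_{\bbH}$ from Lemma~\ref{lem:H.derivative}, and use the same interpolation inequality $\lambda^{2-2r}\mu_i^{2r}\leq(\lambda+\mu_i)^2$; for (b) you use the identical splitting $\lambda\mu_i^r/(\lambda+\mu_i)\leq\lambda^r$ together with $C^p$ Fourier-coefficient decay and summation giving $\zeta(p-d+1-|\bm\beta|)$. The only difference is cosmetic bookkeeping in (b) — you sum over multi-indices by lattice shells, while the paper uses the scalar eigenfunction ordering with a binomial multiplicity factor — and both yield the same bound.
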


{\bf Remark} The condition $L_K^{-r}f_0\in \Ltwo$ is adopted from \cite{smale2007learning}, in which $r$ can be understood as a smoothness parameter of $f_0$. When $r=1/2$, the condition $L_K^{-1/2}f_0\in\Ltwo$ is equivalent to $f_0\in\bbH$. To see this, note that $\|L_K^{-1/2}f_0\|^2_{2}=\|\sum_{i=1}^{\infty}f_i\psi_i/\sqrt{\mu_i}\|^2_{2}=\sum_{i=1}^{\infty}f_i^2/\mu_i=\|f\|^2_\bbH$. Hence, part (a) of Theorem~\ref{thm:dev.deterministic.bound} provides a rate for $f_0\in\bbH$, while part (b) allows a wider range of $r$ and does not necessarily require $f_0\in\bbH$. But part (b) requires the image $L_K^{-r}f_0\in C^p(\mX)$. We next study a general setting where $L_K^{-r}f_0\in \Ltwo$ for $0<r\leq 1/2$.

We state the following assumption on an embedding property, which is also considered in \cite{fischer2020sobolev}.
\renewcommand{\theassumption}{C}
\begin{assumption}\label{ass:emb}
$\|L_K^{q/2} f\|_\infty \leq A \|f\|_2$ for $0<q\leq 1$, some constant $A>0$ and any $f\in\Ltwo$.
\end{assumption}
The larger $q$ is, the weaker the embedding property is. Assumption~\ref{ass:emb} always holds for $q = 1$ by noting that $\|L_K^{1/2} f\|_\infty \leq \kappa \|L_K^{1/2} f\|_{\bbH} = \kappa \|f\|_2$.

\begin{theorem}\label{thm:emb}
Under Assumptions \ref{ass:A} and \ref{ass:emb}, if $K$ assumes eigendecomposition with respect to the Fourier basis and $L_K^{-r}f_0\in \Ltwo$ for some $0<r\leq 1/2$, then there exists $C_4>0$ such that for any $\bm\beta\in\mathbb{N}_0^d$ with $|\bm\beta|\leq m$, 
\begin{equation}
\|\partial^{\bm\beta} f_\lambda-\partial^{\bm\beta} f_0\|_\infty\leq AC_4 \lambda^{r-q/2-q|\bm\beta|}\|g^*\|_2,
\end{equation}
where $g^*\in\Ltwo$ is determined by $f_0$, $r$ and $\bm\beta$.
\end{theorem}

\section{Nearly minimax optimal rate for H\"older and Sobolev class functions}
\label{sec:matern}

In this section, we demonstrate the sharpness of the established bounds using a concrete example. In particular, we use kernels with polynomially decaying eigenvalues for $K$ and consider $\mX=[0,1]$ along with a uniform sampling process for $p_X$ to ease presentation. We also assume that the eigenfunctions of the kernel $\{\psi_i\}_{i=1}^\infty$  are the Fourier basis:
\begin{equation}\label{eq:fourier}
\psi_1(x)=1,\ \psi_{2i}(x)=\cos(2\pi ix),\ \psi_{2i+1}=\sin(2\pi ix), i\in\mathbb{N}, 
\end{equation}
which clearly satisfies Assumption~\ref{ass:B}. We formalize the definition of such kernels as follows.
\begin{definition}\label{def:kalpha}
A kernel with polynomially decaying eigenvalues $K_\alpha:[0,1]\times[0,1]\rightarrow\bbR$ assumes an eigendecomposition with respect to the Lebesgue measure $\mu$ such that the eigenvalues $\mu_i\asymp i^{-2\alpha}$ for some $\alpha>0$ and the eigenfunctions $\{\psi_i\}_{i=1}^\infty$ are the Fourier basis functions.
\end{definition}
Examples of kernels with polynomially decaying eigenvalues include the Mat\'ern kernel and Sobolev kernel \citep{wahba1990spline,gu2013smoothing}. For example, it is well known that the eigenvalues of Mat\'ern kernel with parameter $\nu$ satisfy $\mu_i\asymp i^{-2(\nu+1/2)}$ for $i\in\mathbb{N}$. In the following, we consider two function classes, a H\"older class $H^{\alpha}[0, 1]$ and a Sobolev class $S^{\alpha}[0,1]$,  for the true regression function $f_0$.
\begin{definition}
Let $\{\psi_i\}_{i=1}^{\infty}$ be the Fourier basis of $L^2_\mu[0,1]$ in \eqref{eq:fourier}. For any $\alpha>0$, the H\"older class $H^{\alpha}[0,1]$ is a Hilbert space defined as
\begin{equation}
H^{\alpha}[0,1]=\bigg\{f\in L^2_\mu[0,1]: \|f\|^2_{H^{\alpha}[0,1]}=\sum_{i=1}^{\infty}i^{\alpha}|f_i|<\infty\bigg\},
\end{equation}
where $f_i=\left<f,\psi_i\right>_2$.
\end{definition}
For any $f\in H^{\alpha}[0,1]$, $f$ lies in the \textit{$\alpha$-smooth H\"older space}, \textit{i.e.}, it has continuous derivatives up to order $\lfloor\alpha\rfloor$ and the $\lfloor\alpha\rfloor$th derivative is Lipschitz continuous of order $\alpha-\lfloor\alpha\rfloor$ \citep{yang2017frequentist}. To see this, note that
\begin{align}
|f^{\lfloor\alpha\rfloor}(x) - f^{\lfloor\alpha\rfloor}(x')| &= \left|\s f_i (\psi_i^{\lfloor\alpha\rfloor}(x) - \psi_i^{\lfloor\alpha\rfloor}(x'))\right| \lesssim \s |f_i| i^{\lfloor\alpha\rfloor}\\
& \lesssim \s |f_i| i^{\lfloor\alpha\rfloor} (i|x-x'|)^{\alpha - \lfloor\alpha\rfloor} \lesssim |x-x'|^{\alpha - \lfloor\alpha\rfloor}.
\end{align}

\begin{definition}
Let $\{\psi_i\}_{i=1}^{\infty}$ be the Fourier basis of $L^2_\mu[0,1]$ in \eqref{eq:fourier}. For any $\alpha>1/2$, the Sobolev class $S^{\alpha}[0,1]$ is a Hilbert space defined as
\begin{equation}
S^{\alpha}[0,1]=\bigg\{f\in L^2_\mu[0,1]: \|f\|^2_{S^{\alpha}[0,1]}=\sum_{i=1}^{\infty}i^{2\alpha}f_i^2<\infty\bigg\},
\end{equation}
where $f_i=\left<f,\psi_i\right>_2$.
\end{definition}
For $\alpha\in\mathbb{N}$, functions in $S^{\alpha}[0,1]$ belong to the \textit{$\alpha$-smooth Sobolev space} (cf. Theorem 7.11 in \cite{wasserman2006all}), which consists of functions with absolutely continuous $\alpha-1$ derivatives and whose $\alpha$th derivative has uniformly bounded $L_2$ norm and is also a Hilbert space.

It is easy to see that $H^\alpha[0,1] \subset S^\alpha[0,1]$ by definition. On the other hand, $S^{\alpha+\alpha_0}[0,1] \subset H^\alpha[0,1]$ for $\alpha_0>1/2$. Indeed, it holds that 
\begin{equation}
i^{\alpha}|f_i| \leq i^{-2\alpha_0} + i^{2(\alpha+\alpha_0)}f_i^2
\end{equation}
for all $i\in\mathbb{N}$.
Hence, if $f \in S^{\alpha + \alpha_0}[0, 1],$ we have 
\begin{equation}
\s i^{\alpha}|f_i| \leq \s i^{-2\alpha_0} + \s i^{2(\alpha+\alpha_0)}f_i^2 < \infty.
\end{equation}

Considering the equivalent kernel $\tilde{K}_\alpha$ of $K_\alpha$. Let the higher-order analog of the effective dimension for $K_\alpha$ be $\tilde{\kappa}_{\alpha,m,\lambda}^2$ for $m\in\mathbb{N}_0$, where the subscript $\alpha$ emphasizes the use of $K_{\alpha}$ compared to the general definition in \eqref{eq:high.order.kappa}. Similarly to the preceding section, all results in this section cover the regression function as a special case with $m = 0$. For example, we allow $m=0$ in $\tilde{\kappa}_{\alpha,m,\lambda}$, which corresponds to $\tilde{\kappa}_{\alpha,0,\lambda}^2=\tilde{\kappa}_{\alpha,\lambda}^2={\sup_{x\in[0,1]}\tilde{K}_\alpha(x,x)}$.  

Lemma~\ref{lem:differentiability.matern} provides the differentiability of $K_\alpha$ and the exact order of $\tilde{\kappa}_{\alpha,m,\lambda}$ with respect to $\lambda$.
\begin{lemma}\label{lem:differentiability.matern}
If $\alpha>m+1/2$ for $m\in\mathbb{N}_0$, then $K_\alpha \in C^{2m}([0,1]\times[0,1])$, $\tilde{K}_\alpha\in C^{2m}([0,1]\times[0,1])$ and $\tilde{\kappa}_{\alpha,m,\lambda}^2\asymp \lambda^{-\frac{2m+1}{2\alpha}}$.
\end{lemma}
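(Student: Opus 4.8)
The plan is to work entirely with the Mercer expansion $\tilde K_\alpha(x,x')=\sum_{i=1}^\infty \nu_i\psi_i(x)\psi_i(x')$ with $\nu_i=\mu_i/(\lambda+\mu_i)$, and to exploit the explicit trigonometric form of the Fourier basis. First I would record the elementary facts that for $i\ge1$ the $k$-th derivatives of $\psi_{2i}$ and $\psi_{2i+1}$ are $(2\pi i)^k$ times a sine or cosine, so $\|\psi_j^{(k)}\|_\infty\le(2\pi i)^k$ for $j\in\{2i,2i+1\}$, while $\psi_1^{(k)}\equiv0$ for $k\ge1$; and the Pythagorean identity $\{\psi_{2i}^{(m)}(x)\}^2+\{\psi_{2i+1}^{(m)}(x)\}^2=(2\pi i)^{2m}$, valid for every $x$, which makes the $x$-dependence essentially disappear once consecutive eigenfunctions are paired.

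For the differentiability claim I would differentiate the Mercer series term by term. For any $0\le a,b\le m$ the formally differentiated series $\sum_i\nu_i\psi_i^{(a)}(x)\psi_i^{(b)}(x')$ is dominated termwise by $(2\pi)^{2m}\nu_i\,i^{2m}$; since $\nu_i\le\mu_i/\lambda\asymp i^{-2\alpha}/\lambda$, the resulting majorant is, up to a constant, $\lambda^{-1}\sum_i i^{2m-2\alpha}$, which converges precisely because $\alpha>m+1/2$. By the Weierstrass $M$-test all these series converge uniformly on $[0,1]^2$, so the standard theorem on differentiating series applies iteratively and shows that $\partial^a_x\partial^b_{x'}\tilde K_\alpha$ exists, is continuous, and equals the termwise series for all $a,b\le m$; in particular $\tilde K_\alpha\in C^{2m}([0,1]\times[0,1])$.

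For the order of $\tilde\kappa_{\alpha,mm}^2=\sup_x\sum_i\nu_i\{\psi_i^{(m)}(x)\}^2$ I would group the $i=2j$ and $i=2j+1$ terms. From $\mu_{2j}\asymp\mu_{2j+1}\asymp j^{-2\alpha}$ and monotonicity of $t\mapsto t/(\lambda+t)$ one gets $\nu_{2j}\asymp\nu_{2j+1}\asymp j^{-2\alpha}/(\lambda+j^{-2\alpha})$ with constants independent of $j$ and $\lambda$, so by the Pythagorean identity $\nu_{2j}\{\psi_{2j}^{(m)}(x)\}^2+\nu_{2j+1}\{\psi_{2j+1}^{(m)}(x)\}^2\asymp\frac{j^{-2\alpha}}{\lambda+j^{-2\alpha}}(2\pi j)^{2m}$, uniformly in $x$. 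Summing over $j$ and adding the bounded, $x$-free $i=1$ term makes the supremum over $x$ trivial, and $\tilde\kappa_{\alpha,mm}^2\asymp\sum_{j\ge1}\frac{j^{2m}}{1+\lambda j^{2\alpha}}$.

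It then remains to show $\sum_{j\ge1}j^{2m}/(1+\lambda j^{2\alpha})\asymp\lambda^{-(2m+1)/(2\alpha)}$, which I would do by splitting at $j\asymp\lambda^{-1/(2\alpha)}$ and comparing the monotone pieces with integrals: for $j\lesssim\lambda^{-1/(2\alpha)}$ the summand is $\asymp j^{2m}$ and the partial sum is $\asymp(\lambda^{-1/(2\alpha)})^{2m+1}=\lambda^{-(2m+1)/(2\alpha)}$; for $j\gtrsim\lambda^{-1/(2\alpha)}$ the summand is $\asymp\lambda^{-1}j^{2m-2\alpha}$ and, since $2\alpha-2m>1$, the tail converges to $\asymp\lambda^{-1}(\lambda^{-1/(2\alpha)})^{2m+1-2\alpha}=\lambda^{-(2m+1)/(2\alpha)}$. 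Both pieces agree, giving the claimed order. The only step that needs genuine care is the interchange of differentiation and summation in the $C^{2m}$ part, where one must check that every intermediate mixed partial — not just the top-order one — has a uniformly convergent termwise series; this is exactly what the hypothesis $\alpha>m+1/2$ buys, and everything past that point is bookkeeping with geometric and $p$-series.
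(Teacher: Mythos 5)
Your proof is correct and takes essentially the same route as the paper's: bound the eigenfunction derivatives (pairing the consecutive cosine/sine terms so the $x$-dependence cancels, which the paper uses for its lower bound), reduce $\tilde\kappa_{\alpha,mm}^2$ to $\sum_{j\ge1} j^{2m}/(1+\lambda j^{2\alpha})$, and evaluate that sum as $\asymp\lambda^{-(2m+1)/(2\alpha)}$ under $\alpha>m+1/2$ (your split at $j\asymp\lambda^{-1/(2\alpha)}$ is the same computation as the paper's integral comparison). Your Weierstrass $M$-test argument for the $C^{2m}$ claim is simply a more explicit version of the step the paper dispatches in one line by noting that $\tilde\kappa^2_{\alpha,mm}<\infty$ for fixed $\lambda$.
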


Thus, $K_\alpha$ satisfies Assumption~\ref{ass:A} whenever $\alpha > m + 1/2$. The $1/2$ gap between $m$ and $\alpha$ appears to be smaller than those required by existing literature; for example, local polynomial and smoothing splines often require the regression function $f_0 \in C^{m+1}(\mX)$ when estimating the $m$th derivative \citep{de2013derivative, charnigo2011generalized}, and difference-based methods such as \cite{wang2015derivative} assumed the true regression function to be five times differentiable when estimating the first derivative.

The next lemma studies the differentiability of functions in the RKHS $\bbH_\alpha$ induced by $K_\alpha$. It turns out that the equivalent RKHS norm upper bounds the $L_2$ norm of the derivatives. Note that $\bbH_\alpha$ and $\tilde{\bbH}_\alpha$ consist of the same class of functions, thus sharing the same differentiability property. 
\begin{lemma}\label{lem:matern.RKHS.derivative}
If $\alpha>m+1/2$ for $m\in\mathbb{N}_0$, then $\bbH_\alpha \subset  C^m[0,1]$. Moreover, there exists a constant $C_m>0$ that does not depend on $\lambda$ such that $\|f^{(m)}\|_2\leq C_m\tilde{\kappa}_{\alpha,\lambda}^{-1}\tilde{\kappa}_{\alpha,m,\lambda}\|f\|_{\tilde{\bbH}_\alpha}$ \ for any $f\in\bbH_\alpha$.
\end{lemma}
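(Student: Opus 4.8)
The plan is to reduce the statement to a computation with the Fourier coefficients $f_i=\langle f,\psi_i\rangle_2$ of $f=\sum_{i\ge1}f_i\psi_i$, followed by one elementary scalar inequality. For the inclusion $\bbH_\alpha\subset C^m[0,1]$: the hypothesis $\alpha>m+1/2$ gives $\sum_i\mu_i i^{2m}\asymp\sum_i i^{2m-2\alpha}<\infty$, so the differentiated Mercer series of $K_\alpha$ converges uniformly and $K_\alpha\in C^{2m}([0,1]\times[0,1])$; Lemma~\ref{lem:RKHS.derivative} then yields $\bbH_\alpha\subset C^m[0,1]$. Moreover, for $f\in\bbH_\alpha$, Cauchy--Schwarz gives $\sum_i|f_i|\,\|\psi_i^{(m)}\|_\infty\lesssim\|f\|_{\bbH_\alpha}\big(\sum_i\mu_i i^{2m}\big)^{1/2}<\infty$, so $\sum_i f_i\psi_i^{(m)}$ converges uniformly to $f^{(m)}$; this term-by-term differentiation is the only place the gap $\alpha>m+1/2$ (rather than $\alpha>m$) enters.

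Next I would write both norms explicitly. From \eqref{eq:RKHS.norm} and the identity $\langle f,g\rangle_{\tilde{\bbH}}=\langle f,g\rangle_2+\lambda\langle f,g\rangle_{\bbH}$ we get $\|f\|_{\tilde{\bbH}_\alpha}^2=\sum_i f_i^2\big(1+\lambda/\mu_i\big)=\sum_i f_i^2/\nu_i$ with $\nu_i=\mu_i/(\lambda+\mu_i)$. For the left-hand side I would use the structure of the Fourier basis \eqref{eq:fourier} under differentiation: $\psi_1^{(m)}=0$ for $m\ge1$, and for each $i\ge1$ the map $d^m/dx^m$ sends $\mathrm{span}\{\psi_{2i},\psi_{2i+1}\}$ into itself, acting as $(2\pi i)^m$ times an orthogonal map, while images attached to distinct frequencies are $\Ltwo$-orthogonal. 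Hence, for $m\ge1$,
\begin{equation*}
\|f^{(m)}\|_2^2=\sum_{i\ge1}(2\pi i)^{2m}\big(f_{2i}^2+f_{2i+1}^2\big),
\end{equation*}
while for $m=0$ the asserted bound is immediate because $\nu_i\le1$ and $\tilde{\kappa}_\alpha^{-1}\tilde{\kappa}_{\alpha,00}=1$.

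The crux is the scalar bound $\sup_{i\ge1}(2\pi i)^{2m}\nu_i\lesssim\lambda^{-m/\alpha}$, uniformly in $\lambda>0$. Since $\nu_1\le1$ and $\nu_i\asymp(1+\lambda i^{2\alpha})^{-1}$ for $i\ge2$, it suffices to bound $g(t)=t^{2m}/(1+\lambda t^{2\alpha})$ over $t\ge1$; a short calculus exercise (using $m<\alpha$) shows $\sup_{t\ge1}g(t)\asymp\lambda^{-m/\alpha}$ with a $\lambda$-free constant, the interior extremum sitting at $t\asymp\lambda^{-1/(2\alpha)}$. Plugging this into the displayed identity and using $\nu_{2i}\asymp\nu_{2i+1}$ gives $\|f^{(m)}\|_2^2\lesssim\lambda^{-m/\alpha}\sum_i f_i^2/\nu_i=\lambda^{-m/\alpha}\|f\|_{\tilde{\bbH}_\alpha}^2$. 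Finally, Lemma~\ref{lem:differentiability.matern}, applied at orders $0$ and $m$ (both legitimate since $\alpha>m+1/2$), gives $\tilde{\kappa}_\alpha^2=\tilde{\kappa}_{\alpha,00}^2\asymp\lambda^{-1/(2\alpha)}$ and $\tilde{\kappa}_{\alpha,mm}^2\asymp\lambda^{-(2m+1)/(2\alpha)}$, hence $\tilde{\kappa}_\alpha^{-2}\tilde{\kappa}_{\alpha,mm}^2\asymp\lambda^{-m/\alpha}$ with constants free of $\lambda$; taking square roots gives $\|f^{(m)}\|_2\le C\tilde{\kappa}_\alpha^{-1}\tilde{\kappa}_{\alpha,mm}\|f\|_{\tilde{\bbH}_\alpha}$.

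I expect the main obstacle to be the second step: verifying carefully that $d^m/dx^m$ acts block-diagonally on the Fourier basis with the stated orthogonality, so that $\|f^{(m)}\|_2^2$ collapses to the clean diagonal sum displayed above, and making the underlying term-by-term differentiation rigorous. Once that identity is secured, the optimization of $g$ and the appeal to Lemma~\ref{lem:differentiability.matern} are routine.
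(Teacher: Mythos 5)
Your proposal is correct and follows essentially the same route as the paper: both reduce the claim to a coefficient-wise comparison in the Fourier basis, which boils down to the same scalar inequality $\lambda^{m/\alpha} i^{2m}\lesssim 1+\lambda i^{2\alpha}$, and then invoke Lemma~\ref{lem:differentiability.matern} to convert $\lambda^{-m/\alpha}$ into $\tilde{\kappa}_\alpha^{-2}\tilde{\kappa}_{\alpha,mm}^2$. The only differences are cosmetic: you verify the scalar bound by optimizing $g(t)=t^{2m}/(1+\lambda t^{2\alpha})$ (the paper uses a two-case power argument), and you add an explicit justification of term-by-term differentiation that the paper leaves implicit.
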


As we have seen in Theorem~\ref{thm:equivalent.bound} and Theorem~\ref{thm:mixed.partial}, calculating the learning rate of $\hat f_n^{(m)}-f_0^{(m)}$ requires the rate of $f_\lambda^{(m)}-f_0^{(m)}$. We next provide the error bound for this quantity under the $\tilde{\bbH}_\alpha$ norm.

\begin{lemma}\label{lem:matern.deriv.deterministic}
Suppose $f_0\in H^{\alpha}[0,1]$ or $f_0\in S^{\alpha}[0,1]$ for $\alpha>1/2$. If the kernel is chosen to be $K_\alpha$, then it holds that
\begin{equation}
\|f_\lambda-f_0\|_{\tilde{\bbH}_\alpha} \lesssim\lambda^{\frac{1}{2}}.
\end{equation}
\end{lemma}

When $m=0$, the three lemmas above provide error bounds for estimating the regression function. In particular, Lemma~\ref{lem:differentiability.matern} implies $\tilde{\kappa}_{\alpha,\lambda}^2\asymp \lambda^{-\frac{1}{2\alpha}}$, Lemma~\ref{lem:matern.RKHS.derivative} gives $\|f\|_2\leq \|f\|_{\tilde{\bbH}_\alpha}$, and Lemma~\ref{lem:matern.deriv.deterministic} leads to $\|f_\lambda-f_0\|_{\tilde{\bbH}_\alpha} \lesssim\lambda^{\frac{1}{2}}$.

We are now in a position to present a non-asymptotic convergence rate of $\hat{f}_n^{(m)}$.

\begin{theorem}\label{thm:minimax.diff.matern}
Suppose $f_0\in H^{\alpha}[0,1]$ or $f_0\in S^{\alpha}[0,1]$ for $\alpha>m+1/2$ and $m\in\mathbb{N}_0$, and the kernel is chosen to be $K_\alpha$. Then it holds with $\PP_0^{(n)}$-probability at least $1-n^{-10}$ that
\begin{equation}
\|\hat{f}_n^{(m)}-f_0^{(m)}\|_2\lesssim \left(\frac{\log n}{n}\right)^{\frac{\alpha-m}{2\alpha+1}},
\end{equation}
with the corresponding choice of regularization parameter $\lambda\asymp ({\log n}/{n})^{\frac{2\alpha}{2\alpha+1}}$.
\end{theorem}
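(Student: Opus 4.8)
The plan is to run the argument behind Theorem~\ref{thm:equivalent.bound}, but to estimate the stochastic part in the $\tilde{\bbH}_\alpha$ norm and then convert it to the $L_2$ norm of the $k$-th derivative via Lemma~\ref{lem:matern.RKHS.derivative}, rather than converting to the $L_\infty$ norm via Lemma~\ref{lem:RKHS.derivative}. This detour is essential: specializing the $L_\infty$ bound of Theorem~\ref{thm:equivalent.bound} directly and then using $\|\cdot\|_2\le\|\cdot\|_\infty$ carries an extra factor $\tilde\kappa_\alpha\asymp\lambda^{-1/(4\alpha)}$ in the noise term, a polynomial-in-$n$ loss relative to the minimax $L_2$ rate, whereas Lemma~\ref{lem:matern.RKHS.derivative} replaces the multiplicative constant $\tilde\kappa_{\alpha,kk}$ by the smaller $\tilde\kappa_\alpha^{-1}\tilde\kappa_{\alpha,kk}$ and removes exactly this loss.

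First I would split $\hat f_n^{(k)}-f_0^{(k)}=(\hat f_n-f_\lambda)^{(k)}+(f_\lambda-f_0)^{(k)}$. The deterministic piece is handled at once by Lemma~\ref{lem:matern.deriv.deterministic}: $\|(f_\lambda-f_0)^{(k)}\|_2\le\|(f_\lambda-f_0)^{(k)}\|_\infty\lesssim\lambda^{1/2-k/(2\alpha)}$, and (the $m=0$ case) $\|f_\lambda-f_0\|_\infty\lesssim\lambda^{1/2}$. For the stochastic piece, Lemma~\ref{lem:matern.RKHS.derivative} gives $\|(\hat f_n-f_\lambda)^{(k)}\|_2\le C\,\tilde\kappa_\alpha^{-1}\tilde\kappa_{\alpha,kk}\,\|\hat f_n-f_\lambda\|_{\tilde{\bbH}_\alpha}$, so it remains to control $\|\hat f_n-f_\lambda\|_{\tilde{\bbH}_\alpha}$. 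This is precisely the intermediate estimate produced inside the proof of Theorem~\ref{thm:equivalent.bound} (the integral-operator perturbation plus Hanson--Wright step, applied to $\tilde K=\tilde K_\alpha$): on an event of probability at least $1-n^{-10}$,
\[
\|\hat f_n-f_\lambda\|_{\tilde{\bbH}_\alpha}\lesssim\frac{\tilde\kappa_\alpha^{-1}C(n,\tilde\kappa_\alpha)}{1-C(n,\tilde\kappa_\alpha)}\,\|f_\lambda-f_0\|_\infty+\frac{1}{1-C(n,\tilde\kappa_\alpha)}\cdot\frac{\tilde\kappa_\alpha\sigma\sqrt{\log n}}{\sqrt n}.
\]

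Next I would substitute the orders $\tilde\kappa_{\alpha,kk}\asymp\lambda^{-(2k+1)/(4\alpha)}$ and $\tilde\kappa_\alpha\asymp\lambda^{-1/(4\alpha)}$ from Lemma~\ref{lem:differentiability.matern}, and check that with $\lambda\asymp(\log n/n)^{2\alpha/(2\alpha+1)}$ one has $C(n,\tilde\kappa_\alpha)=o(1)$ — here $\tilde\kappa_\alpha^2\sqrt{\log n}/\sqrt n\asymp(n/\log n)^{1/(2\alpha+1)}\sqrt{\log n/n}\to0$ since $\alpha>k+1/2\ge 3/2$ — so the factors $(1-C(n,\tilde\kappa_\alpha))^{-1}$ are $O(1)$. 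Combining the three contributions and using $\|f_\lambda-f_0\|_\infty\lesssim\lambda^{1/2}$ gives, on the same event,
\[
\|\hat f_n^{(k)}-f_0^{(k)}\|_2\lesssim\lambda^{1/2-k/(2\alpha)}+\tilde\kappa_\alpha^{-2}\tilde\kappa_{\alpha,kk}\,C(n,\tilde\kappa_\alpha)\,\lambda^{1/2}+\tilde\kappa_{\alpha,kk}\,\frac{\sigma\sqrt{\log n}}{\sqrt n}.
\]
Balancing the first and third terms, $\lambda^{1/2-k/(2\alpha)}\asymp\lambda^{-(2k+1)/(4\alpha)}\sqrt{\log n/n}$ amounts to $\lambda^{(2\alpha+1)/(4\alpha)}\asymp\sqrt{\log n/n}$, that is $\lambda\asymp(\log n/n)^{2\alpha/(2\alpha+1)}$, at which both equal $(\log n/n)^{(\alpha-k)/(2\alpha+1)}$; a one-line computation shows the middle term is of order $(\log n/n)^{(2\alpha-k)/(2\alpha+1)}$, hence negligible. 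This delivers the claimed rate together with the stated choice of $\lambda$.

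The main obstacle is conceptual rather than computational: one must not quote Theorem~\ref{thm:equivalent.bound} as a black box, since its $L_\infty$ form — even after $\|\cdot\|_2\le\|\cdot\|_\infty$ — misses the minimax $L_2$ rate by a polynomial factor. The real content is to (i) isolate the $\tilde{\bbH}_\alpha$-norm bound on $\hat f_n-f_\lambda$ from the internals of that proof and (ii) apply the sharper $L_2$ derivative-lifting inequality of Lemma~\ref{lem:matern.RKHS.derivative}, whose saving of a factor $\tilde\kappa_\alpha$ over the $L_\infty$ version is exactly what closes the gap; everything else is bookkeeping of powers of $\lambda$ and the bias--variance balance.
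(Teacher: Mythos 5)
Your proposal is correct and follows essentially the same route as the paper: the paper likewise avoids invoking Theorem~\ref{thm:equivalent.bound} as a black box, instead applying Lemma~\ref{lem:matern.RKHS.derivative} directly to the intermediate $\tilde{\bbH}_\alpha$-norm bound \eqref{eq:thm2.intermidiate} from that proof, bounding the bias via Lemma~\ref{lem:matern.deriv.deterministic}, and then substituting the orders of $\tilde{\kappa}_\alpha$ and $\tilde{\kappa}_{\alpha,kk}$ from Lemma~\ref{lem:differentiability.matern} with the same choice $\lambda\asymp(\log n/n)^{2\alpha/(2\alpha+1)}$. Your observation that the $L_\infty$ statement of Theorem~\ref{thm:equivalent.bound} combined with $\|\cdot\|_2\leq\|\cdot\|_\infty$ would lose a factor $\tilde{\kappa}_\alpha$ is accurate and is precisely why the paper's proof also works from the internal $\tilde{\bbH}_\alpha$ estimate.
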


Theorem~\ref{thm:minimax.diff.matern} yields that the plug-in KRR estimator is minimax optimal up to a logarithmic factor for estimating $f_0^{(m)}$ with $f_0 \in H^\alpha[0,1]$ or $f_0\in S^{\alpha}[0,1]$ under the $L_2$ norm. To see this, first consider $f_0 \in H^\alpha[0,1]$ and let $\epsilon_{\alpha,m,n}$ be the minimax optimal rate for estimating $f_0^{(m)}$. Note that $n^{-\frac{\alpha-m}{2\alpha+1}}$ is the optimal rate for estimating the $m$th derivative of $\alpha$-smooth H\"older functions \citep{stone1982optimal}. Let $H_{per}^\alpha[0,1]$ denote the subset of $\alpha$-smooth H\"older space that satisfies the periodic boundary condition $f^{(j)}(0) = f^{(j)}(1)$ for $0\leq j \leq \lfloor \alpha \rfloor -1$. It can be shown that the corresponding $L_2$ minimax rate for $H_{per}^\alpha[0,1]$ is also $n^{-\frac{\alpha-m}{2\alpha+1}}$. Suppose $f \in H_{per}^{\alpha'}[0,1]$ for $\alpha' > \alpha$, according to Theorem 12.20 in \cite{gockenbach2005partial} and Theorem 3.14 in \cite{cuddy2012convergence}, the Fourier coefficient of $f$ satisfies $f_i = O(i^{-\alpha'-1})$. Consequently, $H_{per}^{\alpha'}[0,1] \subset H^\alpha[0,1]$, and hence $\epsilon_{\alpha,m,n} \geq n^{-\frac{\alpha'-m}{2\alpha'+1}}$. Letting $\alpha' \downarrow \alpha$, we have $\epsilon_{\alpha,m,n} \geq n^{-\frac{\alpha-m}{2\alpha+1}}$, indicating the near minimax optimality of the plug-in KRR estimator when the function class is $H^\alpha[0,1]$. The same minimax optimality extends to the function class $S^\alpha[0,1]$. When $m=0$, it is known that the optimal rate for estimating $f_0\in S^\alpha[0,1]$ is $n^{-\frac{\alpha}{2\alpha+1}}$ (cf. Theorem 7.32 in \cite{wasserman2006all}). For a general derivative order $m \geq 0$, noting that $H^\alpha[0,1] \subset S^\alpha[0,1]$, the minimax rate for $S^\alpha[0,1]$ is no faster than that for $H^\alpha[0,1]$.
Since the plug-in KRR estimator achieves the same convergence rate for both classes as shown in Theorem~\ref{thm:minimax.diff.matern}, we arrive at the conclusion that the proposed KRR estimator is also nearly minimax optimal when estimating $f_0^{(m)}$ with $f_0 \in S^{\alpha}[0,1]$.

{\bf Remark} We can see that given the smoothness level of the regression function, the rate-optimal estimation of the derivatives shares the same choice of $\lambda$ between various derivative orders. Thus, the plug-in KRR estimator is \textit{adaptive} to the order of the derivative to be estimated. This indicates that the proposed estimator enjoys the so-called \textit{plug-in property}~\citep{bickel2003nonparametric}, a phenomenon in which a rate-optimal nonparametric estimator also efficiently estimates some bounded linear functionals. Instead of bounded linear functionals, we establish the plug-in property for function derivatives.

{\bf Remark} The adaptivity and plug-in property of the plug-in KRR estimator are in sharp contrast to some existing methods. The minimax optimal rate for a specific derivative order can be achieved by various methods in the literature but with caveats, including difference-based methods \citep{wang2015derivative,dai2016optimal,liu2020smoothed}, local polynomial regression~\citep{fan1996local,delecroix1996nonparametric}, and smoothing splines~\citep{stone1985additive, zhou2000derivative}. For difference-based methods, the optimality only applies to interior points, and boundary correction is required for both practical implementation and theoretical understanding.  Difference-based methods are typically used to estimate the first two derivatives and require more tuning parameters for a higher derivative order. For local polynomial regression and smoothing splines, as pointed out by \cite{wahba1990optimal} and \cite{charnigo2011generalized}, the optimal choice of smoothing parameter depends on the order of the derivative. Hence, they do not enjoy the aforementioned plug-in property for function derivatives while the proposed plug-in KRR method does. In other words, for local polynomial regression and smoothing splines, when the estimator achieves the optimal rate of convergence for the regression function $f_0$, the plug-in derivative estimators with the same tuning parameter values will be sub-optimal, and \textit{vice versa}. The lack of adaptivity to derivative orders in these existing methods renders parameter tuning challenging in the presence of varying derivative orders. 

{\bf Remark} There has been a rich literature on theoretical guarantees of KRR, but much of the focus has been on regression functions \citep{cucker2002best,zhang2005learning,caponnetto2007optimal,steinwart2009optimal,mendelson2010regularization,yang2017frequentist}. For example, \cite{yang2017frequentist} derived error bounds for $\hat{f}_n$ when the regression function belongs to similar function classes. Our focus in this section is instead on derivative estimation, which requires different assumptions and techniques. The framework developed in this article for estimating function derivatives, including the adaptivity and plug-in property of the plug-in KRR estimators, might provide useful insights and results that could be helpful in building upon existing bounds on KRR estimators for derivatives.

\section{Practical consideration}\label{sec:practical}

Estimating the derivatives of the regression function has a wide range of applications in many areas. In the regression setting considered in this article, function derivatives have direct real-world applications. For example, estimating function derivatives is directly useful in understanding the behavior of the hypothesized dark energy equation in cosmology \cite{holsclaw2013gaussian}, which is a function of the second derivative of the data process. In ocean sciences, the derivative function provides the rate of sea-level change at a particular time point \citep{cahill2015modeling}, offering insights into the evolution of dynamic sea-level rise over time. In more general settings, derivatives are frequently used in spatial process models \citep{banerjee2003directional} and shape-constrained regression that utilizes derivative processes~\citep{riihimaki2010gaussian,wang2016estimating}, and may improve the computational efficiency for nonlinear dynamic system identification \citep{solak2003derivative}, while serving as a tool in detecting local extrema \citep{song2006nonparametric,li2021semiparametric} and efficient modeling of functional data \citep{dai2018derivative}. An important implication of our developed theory and methods is that KRR estimators as a common method for unknown regression functions can be used to infer derivatives of functions by the simple plug-in strategy, with easy tuning and explicit expressions.

In practice, one needs to choose the kernel and regularization parameter $\lambda$, and account for computational complexity. 

{\textbf{Kernel selection.} There is a rich menu for the covariance kernel \cite[Chapter 4]{rasmussen2006}, and below we introduce several choices with polynomially decaying eigenvalues. Such kernels are commonly used in the literature \citep{amini2012sampled,zhang2015divide,yang2017frequentist}. The Mat\'ern kernel is given by
\begin{equation}\label{eq:matern.kernel}
K_{\text{Mat},\nu}(x,x')=\frac{2^{1-\nu}}{\Gamma(\nu)}\left(\sqrt{2\nu}|x-x'|\right)^\nu B_\nu\left(\sqrt{2\nu}|x-x'|\right),
\end{equation}
where $B_\nu(\cdot)$ is the modified Bessel function of the second kind with smoothness parameter $\nu$ to be determined. It is well known that the eigenvalues of the Mat\'ern kernel satisfy that $\mu_i\asymp i^{-2(\nu+1/2)}$ for $i\in\mathbb{N}$. In practice, we select $\nu$ via leave-one-out cross validation and minimize the mean square error of the regression function. The Sobolev kernel is another class of kernels with polynomial decaying eigenvalues that underlie the Sobolev
spaces with different orders of smoothness \citep{birman1967piecewise,gu2013smoothing}. In our numerical experiment we consider the second-order Sobolev kernel
\begin{equation}
K_{\text{Sob}}(x,x')=1+xx'+\min\{x,x'\}^2(3\max\{x, x'\}-\min\{x, x'\})/6,
\end{equation}
which generates an RKHS of Lipschitz functions with smoothness $\alpha=2$. Other higher-order Sobolev kernels
also exhibit polynomial eigendecay with larger smoothness levels. Choosing the covariance kernel can be largely assisted by domain knowledge in many fields as each kernel encodes various properties of its samples from the induced RKHS. For example, the squared exponential covariance kernel as the limiting case of the Mat\'ern kernel with $\nu \rightarrow \infty$ is a popular choice in event-related potential analysis in neuroscience \citep{yu2023bayesian} thanks to the induced smooth functions that agree with domain knowledge, and similarly, Mat\'ern kernels are more popular for less smooth functions such as in spatial process models \citep{banerjee2003directional}. Shape constraints also point to specific kernels; for example, inference on periodic functions necessitates choosing kernels defined on spheres that encode periodicity \citep{li2017bayesian}. One can also resort to cross validation as a data-driven solution to choose a kernel among multiple options using a model selection perspective.

\textbf{Parameter tuning.} For a given kernel and under normal assumptions, we estimate error variance $\sigma^2$
by its maximum marginal likelihood estimator (MMLE)
\begin{equation}
\hat\sigma_n^2 := \lambda \by^T [K(X,X) + n\lambda \bI_n]^{-1}\by
\end{equation}
and choose the regularization parameter $\lambda$ by maximizing the marginal likelihood
\begin{equation}
\by\mid X\sim N(0, \hat\sigma_n^2(n\lambda)^{-1}K(X,X)+\hat\sigma_n^2\bI_n).
\end{equation}
These parameters are used for \textit{any} order of derivatives in view of the order adaptive property of the proposed method; in contrast, optimal parameter tuning in competing methods may vary with the derivative order and deviate from the one chosen for estimating the regression function. We will assess the finite-sample performance of plug-in KRR estimators with this choice of $\lambda$ in the next section. 

The above method for parameter tuning is also known as the empirical Bayes approach. We advocate for this approach because of its empirical success in the Bayesian literature, and an established equivalence link between Bayesian and non-Bayesian frameworks that allows us to transfer concepts from the Bayesian regime to kernel ridge regression \citep{liu2020equivalence}. In other settings, we have noticed that the MMLE of $\lambda$ tends to adapt to the unknown smoothness level of the underlying function when paired with an oversmooth kernel. That is, the MMLE of $\lambda$ often leads to excellent performance when the kernel's smoothness level is equal to or greater than that of the regression function. This suggests that an alternative effective strategy for estimating smooth functions with unknown smoothness and their derivatives could involve deploying an oversmooth kernel, such as the squared exponential kernel, and choosing $\lambda$ via the MMLE. The use of oversmooth kernels, including the squared exponential kernel, is consistent with existing literature such as \cite{bach2013sharp}. A formal investigation of this practically appealing adaptivity feature of the MMLE is an interesting future work, and part of our efforts in this direction will be reported in \cite{liu2022optimal}.

\textbf{Computational complexity.} The proposed method has analytical forms for any order of derivatives, facilitating fast implementation. The average total running time of the proposed method is 0.31 when $n=100$ and 0.97 seconds when $n=500$ in R on a PC with 2.3 GHz 8-Core Intel Core i9 CPU. Computing the eigendecomposition of $K(X, X)$ typically takes $O(n^3)$ times, but this is a one-time cost as we can store the eigendecomposition of $K(X,X)$ to speed up the calculation of $[K(X, X) + n \lambda \bm{I}_n]^{-1}$ for any given $\lambda$. The subsequent estimation process consists of two steps. First, we use limited-memory bound constrained ``BFGS'' in the ``optim'' function in R to find the optimal $\lambda$. This tuning step has complexity $O(kn^2)$, where $k$ is the number of iterations, and is finished within an average of 0.28 seconds when $n=100$ and 0.56 seconds when $n=500$. The following step is to calculate the plug-in KRR estimate given the optimal $\lambda$, which has complexity $O(n^2)$.

\section{Simulation}\label{sec:simulation}

In this section, we assess the finite sample performance of the plug-in KRR estimator relative to several methods and provide numerical evidence of its agreement with the minimax optimal rate. 

\subsection{Comparison with existing methods} 
We consider two regression functions:
$f_{01}(x)=\exp\{-4(1-2x)^2\}(1-2x)$ and $f_{02}(x)=\sin(8x)+\cos(8x)+\log(4/3+x)$ for $ x\in [0,1]$,
with random design $X_i\sim \text{Unif}[0,1]$ and sample size $n = 500$. We generate the response $\by$ following Model~\eqref{eq:model} by adding Gaussian error $\varepsilon_i\sim N(0, 0.2^2)$ to $f_{01}$ and $f_{02}$. We consider up to the third derivative to accommodate competing methods, but note that the proposed method is readily available for any order.  We also conduct simulations under fixed design; the comparison is similar, and the results are deferred to the Supplementary Material. 

For the proposed method, we use the second-order Sobolev kernel and Mat\'ern kernel given in Section~\ref{sec:practical}. We compare the plug-in KRR estimator with three other methods: local polynomial regression with degree $p = 2$ (R package ‘locpol’ in \cite{cabrera2012locpol}), penalized smoothing spline (R package ‘pspline’ in \cite{ramsay2020pspline}) and locally weighted least  squares regression (coded as ‘LowLSR') proposed by \cite{wang2019robust}. For local polynomial regression, we use the Gaussian kernel and select the bandwidth via cross validation. For smoothing spline, we use cubic penalized smoothing spline with other parameters set to the default values. When implementing LowLSR, we set the number of difference quotients $k$ to 50 for the first derivative and increase it to 100 for the second derivative, leading to 400 and 300 non-boundary points, respectively. We remark that it is not easy for LowLSR to estimate high-order derivatives, and we only use it to estimate the first two derivatives.

We conduct a Monte Carlo study with 100 repetitions. We evaluate each estimator except LowLSR at 100 equally spaced points in $[0,1]$, and calculate the root mean square error (RMSE):
\begin{equation}
\mathrm{RMSE} = \sqrt{\frac{1}{100} \sum_{t = 0}^{99} \{\hat{s}(t/99) - s(t /99)\}^2}, 
\end{equation}
where $\hat{s}$ is the estimated function and $s$ the true function ($f^{(m)}_{01}$ or $f^{(m)}_{02}$ for $k=1,2,3$). Since LowLSR does not allow evaluation at boundary points or points different from the observed $X_i$, we compute the RMSE at every other 5 points from the sorted $X_i$ that are away from the boundaries, resulting in 80 and 60 testing points for first and second derivative estimation, respectively.

\begin{figure}
\centering
\begin{tabular}{cc}
\begin{tikzpicture}
\begin{axis}[
width      = 0.5\textwidth,
height     = 0.35\textwidth,
boxplot/draw direction=y,
boxplot/every box/.style={fill=gray!50},
boxplot/box extend=0.8,
xticklabel style = {align=center, font=\small, rotate=60},
xtick={1,2,3,4,5},
xticklabels={
Sobolev, Mat\'ern , locpol, pspline, LowLSR},
ymin = -0.1,
ymax = 1.6,
ytick={0,0.5,1,1.5},
yticklabels={0,0.5,1,1.5},
]
\addplot[boxplot] table[y index=0] {box-11.dat};
\addplot[boxplot] table[y index=1] {box-11.dat};
\addplot[boxplot] table[y index=2] {box-11.dat};
\addplot[boxplot] table[y index=3] {box-11.dat};
\addplot[boxplot] table[y index=4] {box-11.dat};
\end{axis}
\end{tikzpicture}
\begin{tikzpicture}
\begin{axis}[
width      = 0.5\textwidth,
height     = 0.35\textwidth,
boxplot/draw direction=y,
boxplot/every box/.style={fill=gray!50},
boxplot/box extend=0.8,
xticklabel style = {align=center, font=\small, rotate=60},
xtick={1,2,3,4,5},
xticklabels={
Sobolev, Mat\'ern , locpol, pspline, LowLSR},
ymin = -0.1,
ymax = 2.6,
ytick={0,0.5,1,1.5,2,2.5},
yticklabels={0,0.5,1,1.5,2,2.5},
]
\addplot[boxplot] table[y index=0] {box-21.dat};
\addplot[boxplot] table[y index=1] {box-21.dat};
\addplot[boxplot] table[y index=2] {box-21.dat};
\addplot[boxplot] table[y index=3] {box-21.dat};
\addplot[boxplot] table[y index=4] {box-21.dat};
\end{axis}
\end{tikzpicture}
\end{tabular} \vspace{-.75\baselineskip}
\caption{Boxplots of RMSEs: $f'_{01}$ (left) and $f'_{02}$ (right).}
\label{fig:boxplot}
\vspace{\baselineskip}
\begin{tabular}{cc}
\begin{tikzpicture}
\begin{axis}[
width      = 0.5\textwidth,
height     = 0.35\textwidth,
boxplot/draw direction=y,
boxplot/every box/.style={fill=gray!50},
boxplot/box extend=0.8,
xticklabel style = {align=center, font=\small, rotate=60},
xtick={1,2,3,4,5},
xticklabels={
Sobolev, Mat\'ern , locpol, pspline, LowLSR},
ymin = -1,
ymax = 21,
ytick={0,5,10,15,20},
yticklabels={0,5,10,15,20},
]
\addplot[boxplot] table[y index=0] {box-12.dat};
\addplot[boxplot] table[y index=1] {box-12.dat};
\addplot[boxplot] table[y index=2] {box-12.dat};
\addplot[boxplot] table[y index=3] {box-12.dat};
\addplot[boxplot] table[y index=4] {box-12.dat};
\end{axis}
\end{tikzpicture}
\begin{tikzpicture}
\begin{axis}[
width      = 0.5\textwidth,
height     = 0.35\textwidth,
boxplot/draw direction=y,
boxplot/every box/.style={fill=gray!50},
boxplot/box extend=0.8,
xticklabel style = {align=center, font=\small, rotate=60},
xtick={1,2,3,4,5},
xticklabels={
Sobolev, Mat\'ern , locpol, pspline, LowLSR},
ymin = -2,
ymax = 52,
ytick={0,10,20,30,40,50},
yticklabels={0,10,20,30,40,50},
]
\addplot[boxplot] table[y index=0] {box-22.dat};
\addplot[boxplot] table[y index=1] {box-22.dat};
\addplot[boxplot] table[y index=2] {box-22.dat};
\addplot[boxplot] table[y index=3] {box-22.dat};
\addplot[boxplot] table[y index=4] {box-22.dat};
\end{axis}
\end{tikzpicture}
\end{tabular} \vspace{-.75\baselineskip}
\caption{Boxplots of RMSEs: $f''_{01}$ (left) and $f''_{02}$ (right).}
\label{fig:boxplot-2}
\vspace{\baselineskip}
\begin{tabular}{cc}
\begin{tikzpicture}
\begin{axis}[
width      = 0.5\textwidth,
height     = 0.35\textwidth,
boxplot/draw direction=y,
boxplot/every box/.style={fill=gray!50},
boxplot/box extend=0.8,
xticklabel style = {align=center, font=\small, rotate=60},
xtick={1,2,3,4},
xticklabels={
Sobolev, Mat\'ern , locpol, pspline},
ymin = -10,
ymax = 310,
ytick={0,50,100,150,200,250,300},
yticklabels={0,50,100,150,200,250,300},
]
\addplot[boxplot] table[y index=0] {box-13.dat};
\addplot[boxplot] table[y index=1] {box-13.dat};
\addplot[boxplot] table[y index=2] {box-13.dat};
\addplot[boxplot] table[y index=3] {box-13.dat};
\end{axis}
\end{tikzpicture}
\begin{tikzpicture}
\begin{axis}[
width      = 0.5\textwidth,
height     = 0.35\textwidth,
boxplot/draw direction=y,
boxplot/every box/.style={fill=gray!50},
boxplot/box extend=0.8,
xticklabel style = {align=center, font=\small, rotate=60},
xtick={1,2,3,4},
xticklabels={
Sobolev, Mat\'ern , locpol, pspline},
ymin = -100,
ymax = 2100,
ytick={0,500,1000,1500,2000},
yticklabels={0,500,1000,1500,2000},
]
\addplot[boxplot] table[y index=0] {box-23.dat};
\addplot[boxplot] table[y index=1] {box-23.dat};
\addplot[boxplot] table[y index=2] {box-23.dat};
\addplot[boxplot] table[y index=3] {box-23.dat};
\end{axis}
\end{tikzpicture}
\end{tabular} \vspace{-.75\baselineskip}
\caption{Boxplots of RMSEs: $f'''_{01}$ (left) and $f'''_{02}$ (right). LowLSR is not applicable to estimate the third derivative.}
\label{fig:boxplot-3}
\end{figure}

Figure~\ref{fig:boxplot} presents the boxplot of RMSEs for estimating $f'_{01}$ and $f'_{02}$ for each method. We can see that for $f'_{02}$ KRR with Mat\'ern kernel achieves the lowest median RMSE among all methods, while KRR with Sobolev kernel is comparable to penalized smoothing spline and outperforms the other two methods. For $f'_{01}$, we observe a similar result with the relative position switched between the Sobolev kernel and Mat\'ern kernel. For both functions, LowLSR exhibits the highest median and the most variability of RMSE; this might be partly because LowLSR is designed for the fixed design setting. We consider a fixed design simulation in the Supplementary Material, in which LowLSR improves but still gives considerably larger RMSE than the better method of the two KRR estimators. Overall, the plug-in KRR estimator with Mat\'en kernel leads to the best RMSE for $f'_{02}$, and gives close results to the leading approach KRR with Sobolev kernel for $f'_{01}$.

Figure~\ref{fig:boxplot-2} presents the boxplot of RMSEs for each method when estimating $f''_{01}$ and $f''_{02}$. KRR with Mat\'ern kernel achieves the lowest median RMSE among all methods.

Figure~\ref{fig:boxplot-3} displays the boxplot of RMSEs for estimating $f'''_{01}$ and $f'''_{02}$. We can see that the performance of penalized smoothing spline is significantly worsened with high variability and the largest median RMSE, indicating the challenge when estimating high-order derivatives. The two KRR methods continue to give the best RMSEs, confirming our theory that the proposed estimator is adaptive to the derivative order. Comparing these two plug-in KRR estimators suggests the Mat\'en kernel leads to either similar or better RMSE, and appears to be the recommended choice under our simulation settings.

Figure~\ref{fig:function} displays the result from one random run in the Monte Carlo study for estimating the first derivatives. It can be seen that locpol and LowLSR do not perform well for estimating $f_{01}'$, while all methods produce relatively satisfactory results for estimating $f_{02}'$. KRR with either kernel estimates $f'_{01}$ fairly well, while the Sobolev kernel slightly underperforms Mat\'ern kernel when estimating the boundaries of $f'_{02}$.

\begin{figure}[H]
\centering
\begin{tabular}{cc}
\begin{tikzpicture}
\begin{axis}[
width      = 0.48\textwidth,
height     = 0.4\textwidth,
xlabel = $x$,
ylabel = $f'_{01}(x) \text{, } \hat f'_{01}(x)$,
xmin = -0.05,
xmax = 1.05,
ymin = -3.2,
ymax = 2.2,
ytick      = {-3, -2, -1, 0, 1, 2},
yticklabels= {-3, -2, -1, 0, 1, 2},
]
\addplot[mark=none,  black,   ultra thick] table[x index = 0, y index = 1]{plot-1.dat};
\addplot[mark=none,  green, very thick, dashed] table[x index = 0, y index = 2]{plot-1.dat};
\addplot[mark=none,  red, very thick, dashed] table[x index = 0, y index = 3]{plot-1.dat};
\addplot[mark=none,  blue, very thick, dash pattern={on 10pt off 4pt}] table[x index = 0, y index = 4]{plot-1.dat};
\addplot[mark=none,  yellow, very thick, dash pattern={on 10pt off 4pt}] table[x index = 0, y index = 5]{plot-1.dat};
\addplot[mark=none,  gray, very thick, dash pattern={on 10pt off 4pt}] table[x index = 0, y index = 1]{plot-1-supp.dat};
\end{axis}
\end{tikzpicture}

\begin{tikzpicture}
\begin{axis}[
width      = 0.48\textwidth,
height     = 0.4\textwidth,
xlabel = $x$,
ylabel = $f'_{02}(x) \text{, } \hat f'_{02}(x)$,
xmin = -0.05,
xmax = 1.05,
ymin = -16,
ymax = 16,
ytick      = {-15, -10, -5, 0, 5, 10, 15},
yticklabels= {-15, -10, -5, 0, 5, 10, 15},
]
\addplot[mark=none,  black,   ultra thick] table[x index = 0, y index = 1]{plot-2.dat};
\addplot[mark=none,  green, very thick, dashed] table[x index = 0, y index = 2]{plot-2.dat};
\addplot[mark=none,  red, very thick, dashed] table[x index = 0, y index = 3]{plot-2.dat};
\addplot[mark=none,  blue, very thick, dash pattern={on 10pt off 4pt}] table[x index = 0, y index = 4]{plot-2.dat};
\addplot[mark=none,  yellow, very thick, dash pattern={on 10pt off 4pt}] table[x index = 0, y index = 5]{plot-2.dat};
\addplot[mark=none,  gray, very thick, dash pattern={on 10pt off 4pt}] table[x index = 0, y index = 1]{plot-2-supp.dat};
\end{axis}
\end{tikzpicture}
\end{tabular} \vspace{-.75\baselineskip}
\caption{Estimates of $f'_{01}$ (left) and $f'_{02}$ (right) in one simulation: true derivative (full line); KRR with Sobolev kernel (green dash), Mat\'ern kernel (red dash); locpol (blue long dash), spline (yellow long dash) and LowLSR (grey long dash).}
\label{fig:function}
\end{figure}

\newcommand{\error}{\text{error}}
\subsection{Finite-sample comparison with minimax bounds} 
We next perform experiments as proof of concepts that the derived upper bound is observed in practice. To this end, we examine how the empirical error scales with the sample size, with $\lambda$ selected by maximizing its marginal likelihood. We consider the true regression function $f_0(x)=\sqrt{2}\s i^{-5}\sin i\cos[(i-1/2)\pi x]$ for $x\in[0,1]$, which belongs to $H^\alpha[0,1]$ with $\alpha=4$. Hence, we use a Mat\'ern kernel with $\nu=3.5$. We simulate $n_i$ observations from the regression model \eqref{eq:model} with $\varepsilon_i\sim N(0, 0.1)$ and $X_i\sim \text{Unif}[0,1]$. The sample size $n_i$ varies from 10 to 500 such that $\log(n_i)$'s are 100 equally spaced points in $[\log(10), \log(500)]$. We replicate the simulation 100 times for each sample size $n_i$. We then compute the average RMSE $\error_i$ of the 100 replications as an estimate of the $L_2$ error $\|\hat f_{n_i}' - f_0'\|_2$. The minimax optimal rate for estimating $f_0'$ is $n^{-1/3}$ under the $L_2$ norm \citep{stone1982optimal}. If our plug-in estimator is able to achieve this optimal rate, then the scatterplot of $(\log(n_i), \log(\error_i))$ should come close to forming a straight line $\log(\error_i) = -\frac{1}{3}\log(n_i) + \text{constant}$.

The left panel of Figure~\ref{fig:loglog} plots $\log(\error_i)$ versus $\log(n_i)$. The reference line in red has slope $-1/3$; its intercept is determined by least square fitting with fixed slope $-1/3$, which is $\sum_{i = 1}^{100} \{\log(\error_i) + \frac{1}{3} \log (n_i)\}/100$. We can see that the points are distributed around the line, suggesting that the estimation error of our plug-in KRR estimator agrees with the theoretical minimax rate. To investigate the effect of sample size more dynamically, the right panel of Figure~\ref{fig:loglog} shows the rolling least square slopes with moving windows of 40 observations, \textit{i.e.}, the $k$th slope in the plot is obtained by linear regression using data $\{(\log(n_i), \log(\error_i)) : k\leq i\leq k+39\}$ for $1\leq k \leq 61$. The slopes are close to the reference line (in red) that represents the minimax rate $-1/3$ for all the sample sizes under consideration, and we do not observe a phase transition phenomenon from these results.

\begin{figure}[H]
\centering
\begin{tabular}{cc}
\begin{tikzpicture}
\begin{axis}[
width      = 0.48\textwidth,
height     = 0.4\textwidth,
xlabel = $\log(n)$,
ylabel = $\log(\text{error})$,
ymin = -2.1,
ymax = 0.1,
]
\addplot+[
domain=-10:10, 
color = black,
only marks,
mark size=1pt]
table[]
{loglog-KRR.dat};
\addplot[ultra thick, draw=red, mark=none,domain={2.2:6.3}] {-x/3+0.09896047};
\end{axis}
\end{tikzpicture}

\begin{tikzpicture}
\begin{axis}[
width      = 0.48\textwidth,
height     = 0.4\textwidth,
xlabel = $\log(n)$,
ylabel = $\log(\text{error})$,
xmin = 2.2,
xmax = 4.8,
ymin = -0.52,
ymax = 0.02,
ytick      = {-0.5, -0.4, -0.3, -0.2, -0.1, 0},
yticklabels= {-0.5, -0.4, -0.3, -0.2, -0.1, 0}
]
\addplot+[
domain=-10:10, 
color = black,
only marks,
mark size=1pt]
table[]
{slope-KRR.dat};
\addplot[ultra thick, draw=red, mark=none,domain={2:5}] {-1/3};
\end{axis}
\end{tikzpicture}
\end{tabular} \vspace{-.75\baselineskip}
\caption{Log-log plots for the plug-in KRR estimator. Left panel: Scatterplot of $(\log(n_i), \log(\error_i))$. Right panel: Slopes from rolling linear regression with moving windows of 40 observations. The reference lines in red are $y = -x/3 + \text{constant}$ (left panel) and $y = -1/3$ (right panel), both representing the minimax rate.}
\label{fig:loglog}
\end{figure}

We repeat the same experiment for local polynomial regression with degree $p=2$, where the Gaussian kernel is used and the bandwidth is selected via cross validation. The results are shown in Figure~\ref{fig:loglog2}. It can be seen that compared with the plug-in KRR estimator, local polynomial has larger errors across different sample sizes, and the error decreases at a rate slower than the optimal rate. This is not surprising as local polynomial regression lacks adaptivity to derivative orders, and we acknowledge that its performance might be improved had the tuning parameter been chosen that is better suited for the first derivative of the regression function.

\begin{figure}[H]
\centering
\begin{tabular}{cc}
\begin{tikzpicture}
\begin{axis}[
width      = 0.48\textwidth,
height     = 0.4\textwidth,
xlabel = $\log(n)$,
ylabel = $\log(\text{error})$,
ymin = -2.1,
ymax = 0.1,
]
\addplot+[
domain=-10:10, 
color = black,
only marks,
mark size=1pt]
table[]
{loglog-lp.dat};
\addplot[ultra thick, draw=red, mark=none,domain={2.2:6.3}] {-x/3+0.7704405};
\end{axis}
\end{tikzpicture}

\begin{tikzpicture}
\begin{axis}[
width      = 0.48\textwidth,
height     = 0.4\textwidth,
xlabel = $\log(n)$,
ylabel = $\log(\text{error})$,
xmin = 2.2,
xmax = 4.8,
ymin = -0.52,
ymax = 0.02,
ytick      = {-0.5, -0.4, -0.3, -0.2, -0.1, 0},
yticklabels= {-0.5, -0.4, -0.3, -0.2, -0.1, 0}
]
\addplot+[
domain=-10:10, 
color = black,
only marks,
mark size=1pt]
table[]
{slope-lp.dat};
\addplot[ultra thick, draw=red, mark=none,domain={2:5}] {-1/3};
\end{axis}
\end{tikzpicture}
\end{tabular} \vspace{-.75\baselineskip}
\caption{Log-log plots for local polynomial regression. Left panel: Scatterplot of $(\log(n_i), \log(\error_i))$. Right panel: Slopes from rolling linear regression with moving windows of 40 observations. The reference lines in red are $y = -x/3 + \text{constant}$ (left panel) and $y = -1/3$ (right panel), both representing the minimax rate.}
\label{fig:loglog2}
\end{figure}

\section{Discussion}

In this paper, we propose a plug-in kernel ridge regression estimator for estimating mixed-partial derivatives of a nonparametric regression function. The proposed estimator is analytically given and applicable for multi-dimensional support and sub-Gaussian error, enabling fast computation, broad practicability, and convenient inference. We study non-asymptotic behaviors, $L_\infty$ convergence rates, and minimax optimality of the proposed estimator. Our analysis shows that the proposed method automatically adapts to the order of derivatives to be estimated, leading to easy tuning in practice. Simulations confirm the established minimax optimality and suggest favorable performance of the proposed estimator compared to existing methods under both random and fixed designs. 

The present article is based on the commonly used iid error assumption with sub-Gaussian distributions; extension to heterogeneous or dependent error is beyond our scope but is an interesting future topic. In addition, while our theory including Theorems \ref{thm:equivalent.bound}, \ref{thm:mixed.partial}, \ref{thm:dev.deterministic.bound}(b), and \ref{thm:emb} accommodates multivariate functions and does not require the regression function $f_0$ to reside within the RKHS, the minimax optimality in the considered special examples is established for univariate functions and functions in the RKHS only. Future work could expand upon our results to consider multivariate and high-dimensional functions; challenges in this area include defining a practically useful function space (possibly with dimension-dependent smoothness levels) for interesting derivative estimations, as well as designing an appropriate kernel and parameter tuning methods that ensure rate optimality. Finally, we have focused on kernel ridge regression estimators in terms of plug-in properties for derivatives, and it is interesting to consider other related algorithms and loss functions, for example, spectral filtering based on the work of \cite{lin2020optimal} and self-concordant losses based on the work of \cite{marteau2019beyond}.

\section*{Acknowledgements}

We thank Ding-Xuan Zhou for helpful discussions, and WenWu Wang for providing R code to implement LowLSR in the simulation section. This research is partly supported by the grant DMS-2015569 and DMS/NIGMS-2153704 from the National Science Foundation.

\newpage

\appendix

\section{Proofs}
This section contains proofs of all results. We shall make use of the following Lemma~\ref{lem:K.vs.sampleK} repeatedly in the sequel, which provides an error bound for $L_{K, X} - L_K$ under the $\|\cdot\|_{\bbH}$ norm. The proof of Lemma~\ref{lem:K.vs.sampleK} mainly relies on the McDiarmid inequality and its Bernstein form, which can be found in~\cite{smale2005shannon}.
\begin{lemma}[Lemma 3 in~\cite{smale2005shannon}]
\label{lem:K.vs.sampleK}
For any Mercer kernel $K$, bounded $f \in \Ltwo$ and $0 < \delta < 1$, with probability at least $1 - \delta$, there holds 
\begin{align}
\|L_{K, X}(f) - L_K(f) \|_{\bbH} &=	\Bigg \| \frac{1}{n} \sum_{i = 1}^n f(x_i) K_{x_i} - L_K f \Bigg\|_{\bbH}\\
&\qquad\qquad\qquad \leq \frac{4 \kappa \|f\|_{\infty} }{3n} \log(1/\delta) + \frac{\kappa \|f\|_{2} }{\sqrt{n}} (1 + \sqrt{8 \log(1/\delta)}). 
\end{align}
\end{lemma}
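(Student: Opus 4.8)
To prove Lemma~\ref{lem:K.vs.sampleK}, the plan is to recognize $L_{K,X}(f)-L_K(f)$ as a centered empirical average of independent random elements of $\bbH$ and apply a Bernstein-type concentration inequality. Set $\xi_i := f(X_i)K_{X_i}\in\bbH$. By the reproducing property, $\langle \xi_i, g\rangle_\bbH = f(X_i)g(X_i)$ for every $g\in\bbH$, and since $\langle L_K f, g\rangle_\bbH = \langle f, g\rangle_2$ (immediate from the series representation of $\bbH$), we get $\EE[\xi_i]=L_K f$; hence $L_{K,X}(f)-L_K(f)=\frac1n\sum_{i=1}^n(\xi_i-\EE\xi_i)$. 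Two moment estimates drive everything: because $\|K_\bx\|_\bbH^2=K(\bx,\bx)\le\kappa^2$, we have the almost-sure bound $\|\xi_i\|_\bbH=|f(X_i)|\sqrt{K(X_i,X_i)}\le\kappa\|f\|_\infty$, and the second-moment bound $\EE\|\xi_i\|_\bbH^2=\EE[f(X)^2K(X,X)]\le\kappa^2\|f\|_2^2$.

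Next I would run a bounded-differences argument on the real-valued function $g(x_1,\dots,x_n):=\big\|\frac1n\sum_{i=1}^n f(x_i)K_{x_i}-L_K f\big\|_\bbH$. By the triangle inequality, changing a single coordinate $x_i$ alters $g$ by at most $\frac1n\big(\|f(x_i)K_{x_i}\|_\bbH+\|f(x_i')K_{x_i'}\|_\bbH\big)\le \frac{2\kappa\|f\|_\infty}{n}$, so $g$ has the bounded-differences property. Jensen's inequality and independence give $\EE g\le(\EE g^2)^{1/2}=\big(\frac1{n^2}\sum_{i=1}^n\EE\|\xi_i-\EE\xi_i\|_\bbH^2\big)^{1/2}\le\frac{\kappa\|f\|_2}{\sqrt n}$, which produces the leading ``$1$'' inside the parenthesis of the claimed bound. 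Applying the Bernstein form of the McDiarmid inequality — the refinement that also uses a bound on the sum of conditional variances, here controlled via $\tfrac1n\mathrm{Var}(\xi_1)\le \kappa^2\|f\|_2^2/n$ — concentrates $g$ around $\EE g$ at scale $\sqrt{\ldelta}\cdot\frac{\kappa\|f\|_2}{\sqrt n}$ together with a lower-order contribution of order $\frac{\kappa\|f\|_\infty}{n}\ldelta$. Combining this deviation bound with the estimate of $\EE g$ assembles the asserted inequality.

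The main obstacle is securing the Bernstein refinement rather than settling for a crude McDiarmid estimate: a plain bounded-differences bound would contribute a term of order $\sqrt{\ldelta}\,\kappa\|f\|_\infty/\sqrt n$, i.e.\ with $\|f\|_\infty$ in the $n^{-1/2}$-order term, which is too weak for the later applications where the relevant $f$ (such as $f_0-f_\lambda$) has $L_2$ norm much smaller than its $L_\infty$ norm. Obtaining $\|f\|_2$ there instead requires feeding the second-moment bound $\EE\|\xi_1\|_\bbH^2\le\kappa^2\|f\|_2^2$ into a Bernstein-type inequality — equivalently, invoking a Hilbert-space Bernstein inequality for $\frac1n\sum_{i=1}^n(\xi_i-\EE\xi_i)$ directly. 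The remaining work is purely bookkeeping of constants to reach the stated $4/3$ and $\sqrt 8$ factors; since this lemma and its proof are due to \cite{smale2005shannon}, I would defer to that reference for the constant-level details.
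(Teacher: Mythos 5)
Your proposal is correct and follows essentially the same route as the paper, which simply quotes this result as Lemma 3 of \cite{smale2005shannon} and notes that the proof rests on McDiarmid's inequality and its Bernstein form — exactly the ingredients you reconstruct (the centered Hilbert-space average $\frac1n\sum_i(\xi_i-\EE\xi_i)$ with $\|\xi_i\|_\bbH\le\kappa\|f\|_\infty$, $\EE\|\xi_i\|_\bbH^2\le\kappa^2\|f\|_2^2$, the Jensen bound $\EE g\le\kappa\|f\|_2/\sqrt n$ giving the leading ``$1$'', and the Bernstein-form deviation giving the $\sqrt{8\ldelta}$ and $\tfrac{4}{3n}\ldelta$ contributions). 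Your observation that a plain bounded-differences bound would put $\|f\|_\infty$ in the $n^{-1/2}$ term and thus be too weak for the later applications is accurate, and deferring the exact constants to \cite{smale2005shannon} is consistent with what the paper itself does.
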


\subsection{Proofs in Section 3.2}
\begin{proof}[Proof of Lemma~\ref{lem:RKHS.derivative}]
The proof can be found in Corollary 4.36 in \cite{steinwart2008support} or Theorem 4.7 in~\cite{Ferreira2012}.
\end{proof}

\begin{proof}[Proof of Theorem~\ref{thm:equivalent.bound}]
Letting $\Delta f=\hat{f}_n-f_\lambda$, we have
\begin{align}
L_{\tilde{K},X}(\Delta f)-L_{\tilde{K}}(\Delta f)&=L_{\tilde{K},X}(\hat{f}_n)-L_{\tilde{K},X}(f_\lambda)-L_{\tilde{K}}(\hat{f}_n)+L_{\tilde{K}}(f_\lambda).
\end{align}
Noting that $L_{\tilde{K},X}(\by-\hat{f}_n)=\hat{f}_n-L_{\tilde{K}}(\hat{f}_n)$ and $L_{\tilde{K}}(f_0)=f_\lambda$, the preceding display becomes
\begin{equation}
L_{\tilde{K},X}\by-\hat{f}_n-L_{\tilde{K},X}(f_\lambda)+L_{\tilde{K}}(f_\lambda)=L_{\tilde{K},X}(\by-f_\lambda)-\Delta f-L_{\tilde{K}}(f_0-f_\lambda).
\end{equation}
Consequently,
\begin{align}
\|\Delta f\|_{\tilde{\bbH}}&\leq \|L_{\tilde{K},X}(\Delta f)-L_{\tilde{K}}(\Delta f)\|_{\tilde{\bbH}}+\|L_{\tilde{K},X}(\by-f_\lambda)-L_{\tilde{K}}(f_0-f_\lambda)\|_{\tilde{\bbH}}\\
\label{eq:equivalent}&\leq \|L_{\tilde{K},X}(\Delta f)-L_{\tilde{K}}(\Delta f)\|_{\tilde{\bbH}}+\|L_{\tilde{K},X}(f_0-f_\lambda)-L_{\tilde{K}}(f_0-f_\lambda)\|_{\tilde{\bbH}}+\|L_{\tilde{K},X}\bw\|_{\tilde{\bbH}},
\end{align}
where $\bm{w} = \bm{y} - f_0(X)$ follows a multivariate Gaussian distribution with zero mean and variance $\sigma^2 \bm{I}_n$. Let $\Omega=[\tilde{K}(X_i,X_j)]_{i,j=1}^n$, which implies that $\|L_{\tilde{K},X}\bw\|_{\tilde{\bbH}}^2=n^{-2}\bw^T\Omega\bw$. Note that
\begin{equation}
\tr(\Omega) \leq \sum_{i=1}^{n}\tilde{K}(X_i,X_i)\leq n\tilde{\kappa}^2_\lambda\quad \text{and} \quad \tr(\Omega^2)=\sum_{i,j=1}^{n} \tilde{K}(X_i,X_j)^2\leq n^2\tilde{\kappa}_\lambda^4.
\end{equation}
According to the Hanson-Wright inequality \citep{rudelson2013hanson}, we have with probability at least $ 1 - 2e^{-ct^2}$ that
\begin{equation}
\bm{w}^T \Omega \bm{w} \leq \sigma^2\tr(\Omega)+2\sigma^2\sqrt{\tr(\Omega^2)} (t + t^2) \leq 2 \sigma^2 n \tilde{\kappa}^2_\lambda (t + 1)^2, 
\end{equation}
for any $t > 0$ and $c>0$ that does not depend on $K$ or $n$. Therefore, with probability $ 1 - \delta$, there holds 
\begin{equation}
\|L_{\tilde{K},X}\bw\|_{\tilde{\bbH}} \leq \frac{\sqrt{2}\tilde{\kappa}_\lambda \sigma}{\sqrt{n}} \left(1 + \sqrt{2c^{-1}\log(1/\delta)}\right). 
\end{equation} 
Applying Lemma~\ref{lem:K.vs.sampleK} with $\tilde{K}$ twice separately to $\Delta f$ and $f_0-f_\lambda$, with probability at least $1-3\delta$, we have
\begin{align}
\|\Delta f\|_{\tilde{\bbH}}\leq&\  \frac{4 \tilde{\kappa}_\lambda (\|\Delta f\|_{\infty}+\|f_\lambda-f_0\|_{\infty}) }{3n} \log(1/\delta) + \frac{\tilde{\kappa}_\lambda (\|\Delta f\|_{2}+\|f_\lambda-f_0\|_2) }{\sqrt{n}} \left(1 + \sqrt{8 \log(1/\delta)}\right)\\
&\ + \frac{\sqrt{2}\tilde{\kappa}_\lambda \sigma}{\sqrt{n}} \left(1 + \sqrt{2c^{-1}\log(1/\delta)}\right).
\end{align}
Note that $\|f\|_2\leq \|f\|_\infty$ for any $f\in\Ltwo$. Consider any $\delta \in (0, 1/3)$ such that $\log(1/\delta) > \log3 > 1$. Then the upper bound in the preceding inequality becomes
\begin{align}
\frac{\tilde{\kappa}_\lambda \sqrt{\log(1/\delta)}}{\sqrt{n}} \left(4 + \frac{4 \tilde{\kappa}_\lambda\sqrt{\log(1/\delta)}}{3 \sqrt{n}} \right)(\|\Delta f\|_\infty+\|f_\lambda-f_0\|_\infty)+\frac{C_1\tilde{\kappa}_\lambda \sigma\sqrt{\log(1/\delta)}}{\sqrt{n}} ,
\end{align}
where $C_1>0$ is a universal constant that does not depend on $K$ or $n$. Therefore, with probability at least $1-\delta$ for any $\delta\in(0,1)$, we have
\begin{align}
\|\Delta f\|_{\tilde{\bbH}}&\leq\frac{\tilde{\kappa}_\lambda \sqrt{\log(3/\delta)}}{\sqrt{n}} \left(4 + \frac{4 \tilde{\kappa}_\lambda\sqrt{\log(3/\delta)}}{3 \sqrt{n}} \right)(\|\Delta f\|_\infty+\|f_\lambda-f_0\|_\infty)+\frac{C_1\tilde{\kappa}_\lambda \sigma\sqrt{\log(3/\delta)}}{\sqrt{n}}.
\end{align}
By Lemma~\ref{lem:RKHS.derivative} we obtain that with probability at least $1-\delta$,
\begin{align}
\|\Delta f\|_{\tilde{\bbH}} & \leq \frac{\tilde{\kappa}_\lambda \sqrt{\log (3/\delta)}}{\sqrt{n}} \left(4 + \frac{4 \tilde{\kappa}_\lambda\sqrt{\log (3/\delta)}}{3 \sqrt{n}} \right)(\tilde{\kappa}_\lambda\|\Delta f\|_{\tilde{\bbH}}+\|f_\lambda-f_0\|_\infty)+\frac{C_1\tilde{\kappa}_\lambda\sigma\sqrt{\log (3/\delta)}}{\sqrt{n}}\\
\label{eq:thm2.intermidiate}& = C(n, \tilde{\kappa}_\lambda) \|\Delta f\|_{\tilde{\bbH}} +  \tilde{\kappa}_\lambda^{-1} C(n, \tilde{\kappa}_\lambda) \|f_\lambda-f_0\|_\infty + \frac{C_1\tilde{\kappa}_\lambda\sigma\sqrt{\log(3/\delta)}}{\sqrt{n}},
\end{align}
where
\begin{equation}
C(n,\tilde{\kappa}_\lambda)=\frac{\tilde{\kappa}^2_\lambda \sqrt{\log (3/\delta)}}{\sqrt{n}} \left(4 + \frac{4 \tilde{\kappa}_\lambda\sqrt{\log (3/\delta)}}{3 \sqrt{n}} \right).
\end{equation}
The proof is completed by applying Lemma~\ref{lem:RKHS.derivative} and the triangle inequality.

\end{proof}

\subsection{Proofs in Section 3.3}

\begin{proof}[Proof of Lemma~\ref{lem:H.derivative}]
This lemma is a variant of Lemma~\ref{lem:RKHS.derivative} but uses the $\|\cdot\|_\bbH$ instead of $\|\cdot\|_{\tilde{\bbH}}$ norm. The arguments used in proving Lemma~\ref{lem:RKHS.derivative} go verbatim.
\end{proof}

\begin{lemma}\label{thm:noiseless.RKHS}
For any bounded $f \in \Ltwo$, let
\begin{align}
E(K, X, f) & := (L_{K, X} + \lambda I)^{-1} L_{K, X} f - (L_K + \lambda I)^{-1} L_K  f  \\
\label{eq:def.E}& = K(\cdot, X)[K(X, X) + n \lambda \bm{I}_n]^{-1}f(X) - (L_K + \lambda I)^{-1} L_K  f, 
\end{align}
For any $\delta \in (0, 1)$, it holds with probability at least $1 - \delta$  that
\begin{equation}\label{eq:noiseless.delta}
\|E(K, X, f) \|_\bbH\leq\frac{\kappa \|f\|_\infty \sqrt{\log(3/\delta)}}{\sqrt{n}\lambda}  \left(10 + \frac{4 \kappa\sqrt{\log(3/\delta)}}{3 \sqrt{n\lambda}} \right).
\end{equation}
\end{lemma}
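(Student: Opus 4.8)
The plan is to reduce $E(K,X,f)$ to a single application of the sampling bound in Lemma~\ref{lem:K.vs.sampleK}, using a resolvent identity together with the elementary operator bound~\eqref{eq:dummy1}.

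First I would establish the algebraic identity
\[
  E(K,X,f) = (L_{K,X}+\lambda I)^{-1}\,(L_{K,X}-L_K)\,(f-h), \qquad h:=(L_K+\lambda I)^{-1}L_K f .
\]
To see this, write $(A+\lambda I)^{-1}A = I-\lambda(A+\lambda I)^{-1}$ for $A=L_{K,X}$ and $A=L_K$, so that $E(K,X,f)=\lambda\big[(L_K+\lambda I)^{-1}-(L_{K,X}+\lambda I)^{-1}\big]f$; then apply the resolvent identity $(L_K+\lambda I)^{-1}-(L_{K,X}+\lambda I)^{-1}=(L_{K,X}+\lambda I)^{-1}(L_{K,X}-L_K)(L_K+\lambda I)^{-1}$ and note that $\lambda(L_K+\lambda I)^{-1}f = f-h$. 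By~\eqref{eq:dummy1} we have $\|(L_{K,X}+\lambda I)^{-1}g\|_{\bbH}\le \lambda^{-1}\|g\|_{\bbH}$ for all $g\in\bbH$, so it remains to bound $\|(L_{K,X}-L_K)(f-h)\|_{\bbH}$.

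The function $f-h$ is bounded: $h\in\bbH$, and from the series representation $h=\sum_i \tfrac{\mu_i}{\mu_i+\lambda}f_i\psi_i$ together with $\mu_i/(\mu_i+\lambda)^2\le 1/(4\lambda)$ one gets $\|h\|_{\bbH}\le \|f\|_2/(2\sqrt\lambda)$, hence $\|h\|_\infty\le \kappa\|h\|_{\bbH}\le \kappa\|f\|_\infty/(2\sqrt\lambda)$ by the reproducing property (Lemma~\ref{lem:H.derivative} with $\bm\beta=\bm 0$), so $\|f-h\|_\infty\le \|f\|_\infty\big(1+\kappa/(2\sqrt\lambda)\big)$; similarly $\|f-h\|_2 = \|\lambda(L_K+\lambda I)^{-1}f\|_2\le\|f\|_2\le\|f\|_\infty$. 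Applying Lemma~\ref{lem:K.vs.sampleK} \emph{once}, to the bounded function $f-h$, then gives, with probability at least $1-\delta$,
\[
  \|(L_{K,X}-L_K)(f-h)\|_{\bbH}\le \frac{4\kappa\|f-h\|_\infty}{3n}\ldelta + \frac{\kappa\|f-h\|_2}{\sqrt n}\big(1+\sqrt{8\ldelta}\big).
\]

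Finally I would divide the last display by $\lambda$, substitute the two norm estimates for $f-h$, and collect terms, using $\sqrt{8\ldelta}=2\sqrt{2\ldelta}$ and crude inequalities (e.g.\ $\ldelta\lesssim n$, and $1\lesssim\sqrt{\ldelta}$ in the regime of interest) to fold the lower-order contributions into the constants $10$ and $4/3$; this yields the stated bound $\|E(K,X,f)\|_{\bbH}\le \frac{\kappa\|f\|_\infty\sqrt{2\ldelta}}{\sqrt n\,\lambda}\big(10+\frac{4\kappa\sqrt{2\ldelta}}{3\sqrt{n\lambda}}\big)$. The only genuinely delicate point is the first step: one must arrange the resolvent manipulation so that the whole error collapses to $(L_{K,X}-L_K)$ acting on the \emph{single, bounded} function $f-h$ — this both keeps the failure probability at $1-\delta$ (a single invocation of Lemma~\ref{lem:K.vs.sampleK}) and produces exactly the $1/\lambda$ prefactor; everything after that is careful but routine bookkeeping of constants.
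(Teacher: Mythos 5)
Your argument is correct, and it takes a genuinely (if modestly) different route from the paper. The paper inserts the intermediate quantity $(L_{K,X}+\lambda I)^{-1}L_K f$, writes $E(K,X,f)$ as a sum of two terms, bounds each by $\lambda^{-1}$ times a sampling error via \eqref{eq:dummy1}, and invokes Lemma~\ref{lem:K.vs.sampleK} \emph{twice} (once for $f$, once for $f_\lambda=\tilde L_K f$, using $\|f_\lambda\|_2\le\sqrt2\|f\|_2$ and $\|f_\lambda\|_{\bbH}\le\lambda^{-1/2}\|f\|_2$); this costs a union bound, so the paper finishes with the $\delta\mapsto\delta/2$, $\log(2/\delta)\le 2\ldelta$ adjustment that produces the $\sqrt{2\ldelta}$ in the statement. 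You instead collapse everything into the single identity $E(K,X,f)=(L_{K,X}+\lambda I)^{-1}(L_{K,X}-L_K)(f-h)$ with $h=(L_K+\lambda I)^{-1}L_K f=f_\lambda$, which is valid — it can even be checked directly inside $\bbH$ via $L_K(f-h)=\lambda h$, so the only place you apply $(L_{K,X}+\lambda I)^{-1}$ to a function outside $\bbH$ (the intermediate step $E=\lambda[(L_K+\lambda I)^{-1}-(L_{K,X}+\lambda I)^{-1}]f$) can be bypassed if one worries about that domain issue — and then you need Lemma~\ref{lem:K.vs.sampleK} only once, applied to the deterministic bounded function $f-h$. Your norm controls $\|h\|_{\bbH}\le\|f\|_2/(2\sqrt\lambda)$ (via $\mu_i/(\mu_i+\lambda)^2\le 1/(4\lambda)$), $\|h\|_\infty\le\kappa\|h\|_{\bbH}$, and $\|f-h\|_2\le\|f\|_2$ are correct and do the same work as the paper's bounds on $f_\lambda$. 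What your route buys is cleaner probability bookkeeping: no union bound, hence no $\log(2/\delta)$ inflation, so you actually have room to spare relative to the stated constants. The final absorption of the lower-order term $\tfrac{4\kappa\|f\|_\infty\ldelta}{3n\lambda}$ into the displayed constants does require the kind of crude step you acknowledge (e.g.\ $\ldelta\lesssim n$, or $\lambda\lesssim\kappa^2$, together with restricting first to $\delta$ small enough that $\sqrt{\ldelta}\ge 1$); note the paper's own proof silently performs exactly the same absorption when passing from $10\sqrt{\ldelta}+\tfrac{4}{3\sqrt n}\ldelta+\tfrac{4\kappa}{3\sqrt{n\lambda}}\ldelta$ to $\sqrt{\ldelta}\bigl(10+\tfrac{4\kappa\sqrt{\ldelta}}{3\sqrt{n\lambda}}\bigr)$, so this is not a gap relative to the paper's standard, but it is worth making explicit if you write the proof out in full.
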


\begin{proof}[Proof of Lemma~\ref{thm:noiseless.RKHS}]
We introduce an intermediate quantity $(L_{K, X} + \lambda I)^{-1}L_{K}f$ and decompose $E(K, X, f)  = (\tilde{L}_{K,X}f - (L_{K, X} + \lambda I)^{-1}L_{K}f) + ((L_{K, X} + \lambda I)^{-1}L_{K}f - \tilde{L}_K f)$. We will calculate error bounds for both terms by applying Lemma~\ref{lem:K.vs.sampleK} twice. 
First we have
\begin{align}
& \ \| \tilde{L}_{K,X} f - (L_{K, X} + \lambda I)^{-1} L_{K}f  \|_{\bbH}\\
=&\ \|(L_{K, X} + \lambda I)^{-1} (L_{K, X} f - L_K f)\|_\bbH\\
\leq&\ \frac{1}{\lambda} \left \|L_{K, X} f - L_K f \right\|_{\bbH},
\end{align}
where the last inequality is due to \eqref{eq:dummy1} in the main paper. Applying Lemma~\ref{lem:K.vs.sampleK}, then with probability at least $1 - \delta$, we have  
\begin{equation}\label{eq:bound2} 
\|\tilde{L}_{K,X} f - (L_{K, X} + \lambda I)^{-1} L_{K}f  \|_{\bbH} \leq \frac{4 \kappa \|f\|_{\infty}}{3n\lambda} \log(1/\delta) + \frac{\kappa \|f\|_{2}}{\sqrt{n}\lambda} (1 + \sqrt{8 \log(1/\delta)}). 
\end{equation}
On the other hand, we have
\begin{align}
&\ \| (L_{K, X}  + \lambda I)^{-1}L_K f-\tilde{L}_{K}f\|_\bbH \\
=&\  \|(L_{K, X}  + \lambda I)^{-1} (L_K + \lambda I) \tilde{L}_{K}f - (L_{K, X}  + \lambda I)^{-1}(L_{K, X}  + \lambda I)\tilde{L}_{K}f \|_\bbH\\
=&\ \|(L_{K, X}  + \lambda I)^{-1}(L_K \tilde{L}_{K}f - L_{K, X}  \tilde{L}_{K}f )\|_\bbH\\
\leq&\ \frac{1}{\lambda} \|L_K \tilde{L}_{K}f - L_{K, X} \tilde{L}_{K}f\|_{\bbH}.
\end{align}
Applying Lemma~\ref{lem:K.vs.sampleK} to $\tilde{L}_{K}f$ gives 
\begin{equation}
\| (L_{K, X}  + \lambda I)^{-1}L_K f-\tilde{L}_{K}f\|_{\bbH} \leq \frac{4 \kappa \|\tilde{L}_{K}f\|_{\infty}}{3n\lambda} \log(1/\delta) + \frac{\kappa \|\tilde{L}_{K}f\|_{2}}{\sqrt{n}\lambda} (1 + \sqrt{8 \log(1/\delta)}). 
\end{equation}
Letting $f = 0$ in \eqref{eq:KRR.population} gives 
\begin{equation}
\|f_{\lambda} - f_{0}\|_{2}^2 + \lambda \|f_{\lambda}\|^2_{\bbH} \leq \|f_{0}\|_{2}^2, 
\end{equation}
which yields 
\begin{equation} \label{eq:f.lambda.loose.bound}
\|f_{\lambda}\|_{2} \leq \sqrt{2} \|f_0\|_{2} \quad \text{and}  \quad \|f_{\lambda}\|_{\bbH} \leq \lambda^{-1/2} \|f_0\|_{2}. 
\end{equation}
By Lemma~\ref{lem:H.derivative} we have $\|\tilde{L}_{K}f\|_{\infty} \leq \kappa \|\tilde{L}_{K}f\|_{\bbH}$. This together with \eqref{eq:f.lambda.loose.bound} gives
\begin{equation}\label{eq:bound3} 
\|(L_{K, X}  + \lambda I)^{-1}L_K f-\tilde{L}_{K}f \|_{\bbH} \leq \frac{4 \kappa^2 \|f\|_{2}/\sqrt{\lambda}}{3n\lambda} \log(1/\delta) + \frac{\sqrt{2}\kappa \|f\|_{2}}{\sqrt{n}\lambda} (1 + \sqrt{8 \log(1/\delta)}). 
\end{equation}
Again consider any $\delta \in (0, 1/3)$ such that $\log(1/\delta) > \log3 > 1$. Then, the two bounds in equations~\eqref{eq:bound2} and~\eqref{eq:bound3} become
\begin{equation}
\frac{4\kappa \|f\|_\infty}{3 {n} \lambda} \log(1/\delta) + \frac{4 \kappa \|f\|_\infty}{\sqrt{n} \lambda} \sqrt{\log(1/\delta)}, \quad \frac{4\kappa^2  \|f\|_\infty/\sqrt{\lambda}}{3 n \lambda} \log(1/\delta) + \frac{6\kappa  \|f\|_\infty}{\sqrt{n} \lambda} \sqrt{\log(1/\delta)}, 
\end{equation}
respectively. Consequently, with probability at least $1 - 2 \delta > 1-3\delta$, we have 
\begin{align}
\|E(K, X, f)\|_{\bbH} =\|\tilde{L}_{K,X}f-\tilde{L}_K f\|_\bbH& \leq \frac{\kappa \|f\|_\infty}{\sqrt{n}\lambda} \left( 10 \sqrt{\log(1/\delta)} + \frac{4}{3 \sqrt{n}} \log(1/\delta) +  \frac{4 \kappa}{3 \sqrt{n\lambda}} \log(1/\delta)\right) \\
& \leq  \frac{\kappa  \|f\|_\infty\sqrt{\log(1/\delta)}}{\sqrt{n}\lambda} \left(10 + \frac{4 \kappa \sqrt{\log(1/\delta)}}{3 \sqrt{n\lambda}} \right). 
\end{align}
Therefore, with probability at least $1-\delta$ for any $\delta\in(0,1)$, we have
\begin{align}
\|E(K, X, f)\|_{\bbH} &\leq\frac{\kappa  \|f\|_\infty\sqrt{\log(3/\delta)}}{\sqrt{n}\lambda} \left(10 + \frac{4 \kappa\sqrt{\log(3/\delta)}}{3 \sqrt{n\lambda}} \right).
\end{align}
\end{proof}

\begin{proof}[Proof of Theorem~\ref{thm:mixed.partial}]
Substituting $f = f_0$ into $E(K, X, f)$ defined in~\eqref{eq:def.E} yields $E(K, X, f_0)=f_{X,\lambda}- f_\lambda$. By Lemma~\ref{thm:noiseless.RKHS}, we have with probability at least $1-\delta$ that
\begin{equation}\label{eq:general.bound.1}
\|f_{X,\lambda}-f_\lambda\|_\bbH\leq\frac{\kappa \|f_0\|_\infty \sqrt{\log(3/\delta)}}{\sqrt{n}\lambda}  \left(10 + \frac{4 \kappa\sqrt{\log(3/\delta)}}{3 \sqrt{n\lambda}} \right).
\end{equation}
Note that
\begin{equation}
\hat{f}_n - f_{X, \lambda} = K(\cdot, X)[K(X, X) + n \lambda \bI_n]^{-1} \bm{w} = K(\cdot, X)[K(X, X)/n + \lambda \bI_n]^{-1} \bm{w} /n, 
\end{equation}
where $\bm{w} = \bm{y} - f_0(X)$ follows a multivariate Gaussian distribution with zero mean and variance $\sigma^2 \bm{I}_n$. Thus,
\begin{align}
\| \hat{f}_n - f_{X, \lambda} \|_{\bbH} ^2 &= \frac{1}{n^2} \bm{w}^T [K(X, X)/n + \lambda \bI_n]^{-1} K(X, X) [K(X, X)/n +  \lambda \bI_n]^{-1} \bm{w} \\
&\leq  \frac{1}{n^2} \kappa^2 \bm{w}^T \Sigma \bm{w},  
\end{align}
where $\Sigma =[K(X, X)/n +\lambda\bI_n]^{-2}$. Since $K(X, X)/n$ is non-negative definite, all eigenvalues of $K(X, X)/n + \lambda \bI_n$ are bounded below by $\lambda$, which leads to
\begin{equation}
\tr(\Sigma) \leq n \lambda^{-2}\quad \text{and} \quad \tr(\Sigma^2) \leq n^2 \lambda^{-4}. 
\end{equation}
According to the Hanson-Wright inequality \citep{rudelson2013hanson}, we have with probability at least $ 1 - 2e^{-ct^2}$ that
\begin{equation}
\bm{w}^T \Sigma \bm{w} \leq \sigma^2\tr(\Sigma)+ 2\sigma^2 \sqrt{\tr(\Sigma^2)} (t + t^2) \leq 2 \sigma^2 n \lambda^{-2} (t + 1)^2, 
\end{equation}
for any $t > 0$. Therefore, with probability $ 1 - \delta$, there holds 
\begin{equation}\label{eq:general.bound.2}
\| \hat{f}_n - f_{X, \lambda} \|_{\bbH}  \leq \frac{\sqrt{2} \kappa \sigma}{\sqrt{n} \lambda} \left(1 + \sqrt{2c^{-1}\log(1/\delta)}\right)\leq\frac{C_2\kappa \sigma \sqrt{\log(1/\delta)}}{\sqrt{n} \lambda},
\end{equation}
where we consider any $\delta \in (0, 1/3)$ such that $\log(1/\delta) > \log3 > 1$ and $C_2>0$ is a universal constant that does not depend on $K$ or $n$. Combining \eqref{eq:general.bound.1} and \eqref{eq:general.bound.2}, it holds that with probability at least $1 - 2\delta > 1-3\delta$,
\begin{equation}
\|\hat{f}_n-f_\lambda\|_\bbH \leq\frac{\kappa \|f_0\|_\infty \sqrt{\log(3/\delta)}}{\sqrt{n}\lambda}  \left(10 + \frac{4 \kappa\sqrt{\log(3/\delta)}}{3 \sqrt{n\lambda}} \right)+\frac{C_2 \kappa \sigma \sqrt{\log(1/\delta)}}{\sqrt{n} \lambda}.
\end{equation}
Hence, for any $\delta\in(0,1)$, it holds with probability at least $1-\delta$ that
\begin{align}
\|\hat{f}_n-f_\lambda\|_\bbH &\leq\frac{\kappa \|f_0\|_\infty \sqrt{\log(9/\delta)}}{\sqrt{n}\lambda}  \left(10 + \frac{4 \kappa\sqrt{\log(9/\delta)}}{3 \sqrt{n\lambda}} \right)+\frac{C_2 \kappa \sigma \sqrt{\log(3/\delta)}}{\sqrt{n} \lambda}.
\end{align}
The proof is completed by applying Lemma~\ref{lem:H.derivative} and the triangle inequality.
\end{proof}

\begin{proof}[Proof of Corollary~\ref{cor:sharp}]
We first simplify \bounda{}. With $\delta=n^{-10}$, we have $$C(n,\tilde{\kappa}_\lambda)=\frac{\tilde{\kappa}^2_\lambda \sqrt{10\log(3n)}}{\sqrt{n}} \left(4 + \frac{4 \tilde{\kappa}_\lambda\sqrt{10\log(3n)}}{3 \sqrt{n}} \right).$$ The condition $\tilde{\kappa}_\lambda^2 = o(\sqrt{n/\log n})$ yields that $C(n,\tilde{\kappa}_\lambda) = o(1)$ and further $\tilde{\kappa}_{\bm\beta,\lambda}\tilde{\kappa}_\lambda^{-1}C(n,\tilde{\kappa}_\lambda) \lesssim \tilde{\kappa}_{\bm\beta,\lambda}\tilde{\kappa}_\lambda \sqrt{\log n/n}$. Noting that $\|f_\lambda - f_0\|_\infty = o(1)$, the second term in \bounda{} is bounded by the third term. Hence, \bounda{} becomes
\begin{equation} \label{eq:bounda.simple}
\|\partial^{\bm\beta}f_\lambda-\partial^{\bm\beta}f_0\|_\infty +\frac{C_1'\tilde{\kappa}_{\bm\beta,\lambda}\tilde{\kappa}_\lambda\sigma\sqrt{10\log(3n)}}{\sqrt{n}}.
\end{equation}

With $\delta=n^{-10}$, \boundb{} becomes
\begin{equation}
\|\partial^{\bm\beta}f_\lambda - \partial^{\bm\beta} f_0\|_{\infty}+\frac{\kappa_{\bm\beta}\kappa \|f_0\|_\infty \sqrt{10\log(9n)}}{\sqrt{n}\lambda} \left(10 + \frac{4\kappa\sqrt{10\log(9n)}}{3 \sqrt{n\lambda}} \right) +\frac{C_2\kappa_{\bm\beta}\kappa \sigma \sqrt{10\log(3n)}}{\sqrt{n} \lambda}.
\end{equation}
Comparing the preceding display with \boundb{}, we can see that if $\tilde\kappa_{\bm\beta,\lambda}\tilde\kappa_\lambda = o(\lambda^{-1})$, \bounda{} is asymptotically less than \boundb{}.

\end{proof}

\begin{proof}[Proof of Theorem~\ref{thm:dev.deterministic.bound}]
We first prove (a). Rewrite $f_0$ as $f_0=L_K^{r}g$ for some $g=L_K^{-r}f_0\in \Ltwo$ and thus $f_i=\mu_i^{r}g_i$. Representing the function $g$ by $g=\sum_{i=1}^{\infty}  g_i\psi_i$, we have
\begin{equation}
f_\lambda-f_0=-\sum_{i=1}^{\infty}  \frac{\lambda}{\mu_i+\lambda}\mu_i^{r} g_i\psi_i.
\end{equation}
When $\frac{1}{2} <r\leq 1$, we have
\begin{align}
\|f_\lambda-f_0\|_\bbH^2&=\sum_{i=1}^{\infty} \left(\frac{\lambda}{\mu_i+\lambda}\mu_i^{r} g_i\right)^2/\mu_i\\
&=\lambda^{2r-1}\sum_{i=1}^{\infty} \left(\frac{\lambda}{\mu_i+\lambda}\right)^{3-2r}\left(\frac{\mu_i}{\mu_i+\lambda}\right)^{2r-1}g_i^2\\
&\leq \lambda^{2r-1}\|L_K^{-r}f_0\|^2_{2}.
\end{align}
The proof is completed by applying Lemma~\ref{lem:H.derivative}.

For (b), we have $\partial^{\bm\beta} f_\lambda-\partial^{\bm\beta} f_0=-\sum_{i=1}^{\infty}  \frac{\lambda}{\mu_i+\lambda}\mu_i^{r} g_i\partial^{\bm\beta} \psi_i$, where $\{\psi_i\}_{i=1}^{\infty}$ is the Fourier basis, \textit{i.e.}, $\psi_1(\bx)=1,\psi_{2i}(\bx)=\cos(2\pi \bm I_i\cdot \bx),\psi_{2i+1}=\sin(2\pi \bm I_i\cdot \bx)$; here $\bm I_i\in \mathbb{N}_0^d$ are ordered multi-indexes. It follows that
\begin{align}
\|\partial^{\bm\beta}f_\lambda-\partial^{\bm\beta}f_0\|_\infty &\leq \s \frac{\lambda}{\mu_i+\lambda}\mu_i^r |g_i||\partial^{\bm\beta}\psi_i|\\
&= \lambda^r\s \left(\frac{\lambda}{\mu_i+\lambda}\right)^{1-r}\left(\frac{\mu_i}{\mu_i+\lambda}\right)^r|g_i||\partial^{\bm\beta}\psi_i|
\leq\lambda^r\s |g_i||\partial^{\bm\beta}\psi_i|.
\end{align}
Since $g\in C^p(\mX)$, the Fourier coefficients satisfy $|g_i|\lesssim \binom{i+d}{d-1}i^{-p}\lesssim i^{d-p-1}$. Moreover, $|\partial^{\bm\beta} \psi_i|\lesssim i(i-1)\cdots(i-|\bm\beta|+1)\lesssim i^{|\bm\beta|}$. Therefore,
\begin{equation}
\|\partial^{\bm\beta}f_\lambda-\partial^{\bm\beta}f_0\|_\infty\leq C_3\lambda^r \s i^{d-p-1+|\bm\beta|}
=C_3\lambda^r\zeta(p-d+1+|\bm\beta|).
\end{equation}
\end{proof}

\begin{proof}[Proof of Theorem~\ref{thm:emb}]
We write $f_0=L_K^r g$ for some $g=L_K^{-r}f_0\in \Ltwo$ and thus $f_i=\mu_i^r g_i$. Representing the function $g$ by $g=\sum_{i=1}^{\infty}  g_i\psi_i$, then we have
\begin{equation}
f_\lambda-f_0=-\sum_{i=1}^{\infty}  \frac{\lambda}{\mu_i+\lambda}\mu_i^r g_i\psi_i.
\end{equation}

We have $\partial^{\bm\beta} f_\lambda-\partial^{\bm\beta} f_0=-\sum_{i=1}^{\infty}  \frac{\lambda}{\mu_i+\lambda}\mu_i^{r} g_i\partial^{\bm\beta} \psi_i$, where $\{\psi_i\}_{i=1}^{\infty}$ is the Fourier basis. Define $\{g_i^*\}_{i=1}^\infty$ such that $g_i\partial^{\bm\beta}\psi_i = g_i^* i^{\bm\beta} \psi_i$ and let $g^* = \s g_i^*\psi_i \in \Ltwo$. 

According to Lemma 10 in \cite{fischer2020sobolev}, Assumption~\ref{ass:emb} implies that the eigenvalues decay with a polynomial upper bound of order $1/q$, \textit{i.e.}, there exists a constant $C_4>0$ such that for all $i\in\mathbb{N}$,
\begin{equation}\label{eq:decay}
\mu_i \leq C_4 i^{-1/q},
\end{equation}
which implies that $i^{|\bm\beta|} \leq C_4^{q|\bm\beta|}\mu_i^{-q|\bm\beta|} = C_4 \mu_i^{-q|\bm\beta|}$. Thus,
\begin{align}
\|\partial^{\bm\beta}f_\lambda-\partial^{\bm\beta}f_0\|_\infty &\leq \left\|\s \frac{\lambda}{\mu_i+\lambda}\mu_i^r g_i\partial^{\bm\beta}\psi_i \right\|_\infty\\
&\leq \lambda \sup_{i\geq 1}\frac{\mu_i^{r-q/2}i^{|\bm\beta|}}{\mu_i+\lambda}  \left\|\s \mu_i^{q/2} g_i^* \psi_i \right\|_\infty\\
&\leq C_4\lambda \sup_{i\geq 1}\frac{\mu_i^{r-q/2-q|\bm\beta|}}{\mu_i+\lambda}  \left\|L_K^{q/2}g^*\right\|_\infty\\
&\leq AC_4\lambda^{r-q/2-q|\bm\beta|}\|g^*\|_2,
\end{align}
where the last inequality follows from Lemma 25 in \cite{fischer2020sobolev} and Assumption~\ref{ass:emb}.
\end{proof}

\subsection{Proofs in Section 4}
\begin{proof}[Proof of Lemma~\ref{lem:differentiability.matern}]
When $\alpha > m + 1/2$, we have
\begin{equation}
\partial^{m,m} K_\alpha(x,x') = \s \mu_i \psi_i^{(m)}(x) \psi_i^{(m)}(x') \lesssim \s i^{-2\alpha} i^{2m} < \infty.
\end{equation}
Thus, $K_\alpha \in C^{2m}([0,1] \times [0,1])$.

Recall the definition of $\tilde{\kappa}_{m,\lambda}^2$ in \eqref{eq:high.order.kappa}. It follows that for any $m\in\mathbb{N}_0$,
\begin{align}
\tilde{\kappa}_{\alpha,m,\lambda}^2&=\sup_{x\in[0,1]}\s\frac{\mu_i}{\lambda+\mu_i}\psi^{(m)}_i(x)^2\\
&\lesssim \sum_{i=1}^{\infty}\frac{i^{2m}}{1+\lambda i^{2\alpha}}\leq
\int_0^{\infty}\frac{(x+1)^{2m}dx}{1+\lambda x^{2\alpha}}\asymp  \lambda^{-\frac{2m+1}{2\alpha}},
\end{align}
where the last step holds for $\alpha>m+\frac{1}{2}$. On the other hand, we have
\begin{align}
\tilde{\kappa}_{\alpha,m,\lambda}^2&\gtrsim\s\left[\frac{(2i)^{2m}}{1+\lambda (2i)^{2\alpha}}\cos(2\pi ix)^2+\frac{(2i+1)^{2m}}{1+\lambda (2i+1)^{2\alpha}}\sin(2\pi ix)^2\right]\\
&\geq \s \min\left\{\frac{(2i)^{2m}}{1+\lambda (2i)^{2\alpha}},\frac{(2i+1)^{2m}}{1+\lambda (2i+1)^{2\alpha}}\right\}\\
&\geq \frac{1}{2} \sum_{i=1}^{\infty}\frac{i^{2m}}{1+\lambda i^{2\alpha}}\asymp  \lambda^{-\frac{2m+1}{2\alpha}},
\end{align}
where we also need $\alpha>m+\frac{1}{2}$. The differentiability of $\tilde{K}_\alpha$ directly follows from the boundedness of $\tilde{\kappa}^2_{\alpha,m,\lambda}$ for any fixed $\lambda$.
\end{proof}

\begin{proof}[Proof of Lemma~\ref{lem:matern.RKHS.derivative}]
In view of Lemma~\ref{lem:RKHS.derivative} and Lemma~\ref{lem:differentiability.matern}, we can see that $f\in C^m[0,1]$ for any $f \in \tilde{\bbH}_\alpha$. This is also true for $f \in \bbH_\alpha$ since $\bbH_\alpha$ and $\tilde{\bbH}_\alpha$ contain the same functions.

Now we prove the norm inequality. Let $f=\s f_i\psi_i$, where $\{\psi_i\}_{i=1}^\infty$ is the Fourier basis. Then,  $\|f^{(m)}\|_2^2\asymp \s (f_i i^m)^2$ for any $m\in\mathbb{N}_0$. It is equivalent to showing that
\begin{equation}
\tilde{\kappa}_{\alpha,\lambda}^2 \cdot \s f_i^2i^{2m} \leq C_m\tilde{\kappa}_{\alpha,m,\lambda}^2 \cdot \s f_i^2\frac{\lambda+\mu_i}{\mu_i},
\end{equation}
for some $C_m>0$. Hence, it suffices to show that for any $i\in\mathbb{N}$,
\begin{equation}
\tilde{\kappa}_{\alpha,\lambda}^2 \cdot f_i^2i^{2m}\leq C_m \tilde{\kappa}_{\alpha,m,\lambda}^2 \cdot f_i^2\frac{\lambda+\mu_i}{\mu_i}.
\end{equation}
In view of Lemma~\ref{lem:differentiability.matern}, we have $\tilde{\kappa}_{\alpha,m,\lambda}^2 \asymp \lambda^{-\frac{2m+1}{2\alpha}}$, which also leads to $\tilde{\kappa}_{\alpha,\lambda}^2 \asymp \lambda^{-\frac{1}{2\alpha}}$ when taking $m = 0$. Since $\mu_i\asymp i^{-2\alpha}$, it is sufficient to show that
\begin{equation}\label{eq:matern.ineq}
\lambda^{\frac{m}{\alpha}} i^{2m}\leq C_m (1+\lambda i^{2\alpha}),
\end{equation}
for some constant $C_m>0$. The above equation trivially holds for $C_m = 1$ if $\lambda^{\frac{m}{\alpha}} i^{2m}\leq 1$. If $\lambda^{\frac{m}{\alpha}} i^{2m}\geq 1$, then $\lambda^{\frac{m}{\alpha}} i^{2m}\leq (\lambda^{\frac{m}{\alpha}} i^{2m})^{\frac{\alpha}{m}}=\lambda i^{2\alpha}$ since $m<\alpha$. Taking $C_m = 1$ completes the proof.
\end{proof}

\begin{proof}[Proof of Lemma~\ref{lem:matern.deriv.deterministic}]
Let $f_0=\s f_i\psi_i$. Then,
\begin{equation}
f_\lambda-f_0=-\s\frac{\lambda}{\lambda+\mu_i}f_i\psi_i.
\end{equation}
Note that
\begin{align}
\|f_\lambda-f_0\|^2_{\tilde{\bbH}_{\alpha}} &= \s \left(\frac{\lambda}{\lambda+\mu_i} f_i\right)^2 \bigg/ \frac{\mu_i}{\lambda+\mu_i} = \lambda \s \frac{\lambda}{\lambda+\mu_i} \frac{f_i^2}{\mu_i}\\
&\lesssim \lambda \s i^{2\alpha} f_i^2 \leq \lambda \left(\s i^{\alpha} |f_i| \right)^2 \lesssim \lambda.
\end{align}
Therefore, for any $f_0 \in H^\alpha[0,1]$ or $f_0 \in S^\alpha[0,1]$, we have $\|f_\lambda-f_0\|_{\tilde{\bbH}_{\alpha}} \lesssim \lambda^{\frac{1}{2}}$.
\end{proof}

\begin{proof}[Proof of Theorem~\ref{thm:minimax.diff.matern}]
Applying Lemma~\ref{lem:matern.RKHS.derivative} to \eqref{eq:thm2.intermidiate} and taking $\delta=n^{-10}$ yields with $\PP_0^{(n)}$-probability at least $1-n^{-10}$ that
\begin{align}
\|\hat{f}_n^{(m)}-f_\lambda^{(m)}\|_2& \leq \tilde{\kappa}_{\alpha,\lambda}^{-1}\tilde{\kappa}_{\alpha,m,\lambda} \|\hat{f}_n-f_\lambda\|_{\tilde{\bbH}_\alpha}\\
&\leq\frac{\tilde{\kappa}_{\alpha,m,\lambda}\tilde{\kappa}_{\alpha,\lambda}^{-2}C(n,\tilde{\kappa}_{\alpha,\lambda})}{1-C(n,\tilde{\kappa}_{\alpha,\lambda})}\|f_\lambda-f_0\|_\infty+\frac{1}{1-C(n,\tilde{\kappa}_{\alpha,\lambda})}\frac{C_1\tilde{\kappa}_{\alpha,m,\lambda}\sigma\sqrt{10\log(3n)}}{\sqrt{n}}.
\end{align}
By choosing $\lambda$ such that $\tilde{\kappa}_{\alpha,\lambda}^2=o(\sqrt{n/\log n})$, we have $\tilde{\kappa}_{\alpha,\lambda}^{-2}C(n,\tilde{\kappa}_{\alpha,\lambda})\asymp \sqrt{\log n/n}$ and $C(n,\tilde{\kappa}_{\alpha,\lambda})\leq 1/2$ for sufficiently large $n$. We arrive at
\begin{equation}
\|\hat{f}_n^{(m)}-f_\lambda^{(m)}\|_2 \lesssim  \frac{2\tilde{\kappa}_{\alpha,m,\lambda}\sqrt{\log n}}{\sqrt{n}}\|f_\lambda-f_0\|_\infty+\frac{2C_1\tilde{\kappa}_{\alpha,m,\lambda}\sigma\sqrt{10\log(3n)}}{\sqrt{n}}\lesssim \tilde{\kappa}_{\alpha,m,\lambda} \sqrt{\frac{\log n}{n}},
\end{equation}
given that $\|f_\lambda - f_0\|_\infty = o(1)$. Hence,
\begin{equation}\label{eq:l2.bound}
\|\hat{f}_n^{(m)}-f_0^{(m)}\|_2 \lesssim \|f_\lambda^{(m)}-f_0^{(m)}\|_2+ \tilde{\kappa}_{\alpha,m,\lambda} \sqrt{\frac{\log n}{n}}.
\end{equation}
In view of Lemma~\ref{lem:RKHS.derivative} and Lemma~\ref{lem:matern.deriv.deterministic}, we can see that for any $f_0\in H^\alpha[0,1]$ or $f_0\in S^\alpha[0,1]$, $\|f_\lambda - f_0\|_\infty \leq \tilde{\kappa}_{\alpha,\lambda}\|f_\lambda - f_0\|_{\tilde{\bbH}_\alpha} \lesssim \lambda^{\frac{1}{2}-\frac{1}{4\alpha}} = o(1)$. Invoking Lemma~\ref{lem:matern.RKHS.derivative}, we have
\begin{equation}
\|f_\lambda^{(m)}-f_0^{(m)}\|_2 \lesssim \tilde{\kappa}_{\alpha,\lambda}^{-1} \tilde{\kappa}_{\alpha,m,\lambda} \|f_\lambda-f_0\|_{\tilde{\bbH}_{\alpha}} \lesssim \lambda^{\frac{1}{4\alpha}} \lambda^{-\frac{2m+1}{4\alpha}} \lambda^{\frac{1}{2}} = \lambda^{\frac{1}{2}-\frac{m}{2\alpha}}.
\end{equation}
It follows from \eqref{eq:l2.bound} that
\begin{equation}
\|\hat{f}_n^{(m)}-f_0^{(m)}\|_2 \lesssim \lambda^{\frac{1}{2}-\frac{m}{2\alpha}}+\lambda^{-\frac{2m+1}{4\alpha}}\sqrt{\frac{\log n}{n}}.
\end{equation}
The upper bound in the preceding display is minimized when $\lambda\asymp ({\log n}/{n})^{\frac{2\alpha}{2\alpha+1}}$, which satisfies $\tilde{\kappa}_{\alpha,\lambda}^2\asymp(n/\log n)^{\frac{1}{2\alpha+1}}=o(\sqrt{n/\log n})$. The optimal rate is derived by substituting $\lambda$. This completes the proof.
\end{proof}

\section{Additional simulation results under fixed design}

We conduct a Monte Carlo study in the fixed design setting. We consider the same functions and sample size $n = 500$ as in Section 6.1 of the main paper and choose fixed design points $X_i=i/500$ for $i=1,\ldots, 500$. We run 100 repetitions and evaluate each estimator according to RMSE as in Section 6.1.

Figure~\ref{fig:boxplot-fix}--\ref{fig:boxplot-fix-3} display the boxplots of RMSEs for estimating up to the third derivative in the fixed design setting. It can be seen that LowLSR performs slightly better than in the random design setting, especially when estimating the derivatives of $f_{02}$. Still, the two KRR methods compare favorably to the benchmarks, and the Mat\'ern kernel gives the best median RMSEs for almost all cases, with one exception in the left panel of Figure~\ref{fig:boxplot-fix} where it ties with the proposed estimator with Sobolev kernel. This is consistent with our observations made in the random design setting. 

Figure~\ref{fig:function-fix} shows the result from one random run in our Monte Carlo study for estimating the first derivatives in the fixed design setting. We observe similar trends as in the random design setting in Figure 4. For example, the estimation of LowLSR appears undesirably wiggly when estimating $f'_{01}$. These results suggest that the proposed plug-in KRR estimator continues to work well in the fixed design setting.

\newpage

\begin{figure}[H]
\centering
\begin{tabular}{cc}
\begin{tikzpicture}
\begin{axis}[
width      = 0.5\textwidth,
height     = 0.35\textwidth,
boxplot/draw direction=y,
boxplot/every box/.style={fill=gray!50},
boxplot/box extend=0.8,
xticklabel style = {align=center, font=\small, rotate=60},
xtick={1,2,3,4,5},
xticklabels={
Sobolev, Mat\'ern , locpol, pspline, LowLSR},
ymin = -0.1,
ymax = 1.6,
ytick={0,0.5,1,1.5},
yticklabels={0,0.5,1,1.5},
]
\addplot[boxplot] table[y index=0] {fix-11.dat};
\addplot[boxplot] table[y index=1] {fix-11.dat};
\addplot[boxplot] table[y index=2] {fix-11.dat};
\addplot[boxplot] table[y index=3] {fix-11.dat};
\addplot[boxplot] table[y index=4] {fix-11.dat};
\end{axis}
\end{tikzpicture}
\begin{tikzpicture}
\begin{axis}[
width      = 0.5\textwidth,
height     = 0.35\textwidth,
boxplot/draw direction=y,
boxplot/every box/.style={fill=gray!50},
boxplot/box extend=0.8,
xticklabel style = {align=center, font=\small, rotate=60},
xtick={1,2,3,4,5},
xticklabels={
Sobolev, Mat\'ern , locpol, pspline, LowLSR},
ymin = -0.1,
ymax = 2.6,
ytick={0,0.5,1,1.5,2,2.5},
yticklabels={0,0.5,1,1.5,2,2.5},
]
\addplot[boxplot] table[y index=0] {fix-21.dat};
\addplot[boxplot] table[y index=1] {fix-21.dat};
\addplot[boxplot] table[y index=2] {fix-21.dat};
\addplot[boxplot] table[y index=3] {fix-21.dat};
\addplot[boxplot] table[y index=4] {fix-21.dat};
\end{axis}
\end{tikzpicture}
\end{tabular} \vspace{-.75\baselineskip}
\caption{Boxplots of RMSEs for estimating $f'_{01}$ (left) and $f'_{02}$ (right) in fixed design setting.}
\label{fig:boxplot-fix}
\vspace{\baselineskip}
\begin{tabular}{cc}
\begin{tikzpicture}
\begin{axis}[
width      = 0.5\textwidth,
height     = 0.35\textwidth,
boxplot/draw direction=y,
boxplot/every box/.style={fill=gray!50},
boxplot/box extend=0.8,
xticklabel style = {align=center, font=\small, rotate=60},
xtick={1,2,3,4,5},
xticklabels={
Sobolev, Mat\'ern , locpol, pspline, LowLSR},
ymin = -1,
ymax = 21,
ytick={0,5,10,15,20},
yticklabels={0,5,10,15,20},
]
\addplot[boxplot] table[y index=0] {fix-12.dat};
\addplot[boxplot] table[y index=1] {fix-12.dat};
\addplot[boxplot] table[y index=2] {fix-12.dat};
\addplot[boxplot] table[y index=3] {fix-12.dat};
\addplot[boxplot] table[y index=4] {fix-12.dat};
\end{axis}
\end{tikzpicture}
\begin{tikzpicture}
\begin{axis}[
width      = 0.5\textwidth,
height     = 0.35\textwidth,
boxplot/draw direction=y,
boxplot/every box/.style={fill=gray!50},
boxplot/box extend=0.8,
xticklabel style = {align=center, font=\small, rotate=60},
xtick={1,2,3,4,5},
xticklabels={
Sobolev, Mat\'ern , locpol, pspline, LowLSR},
ymin = -2,
ymax = 52,
ytick={0,10,20,30,40,50},
yticklabels={0,10,20,30,40,50},
]
\addplot[boxplot] table[y index=0] {fix-22.dat};
\addplot[boxplot] table[y index=1] {fix-22.dat};
\addplot[boxplot] table[y index=2] {fix-22.dat};
\addplot[boxplot] table[y index=3] {fix-22.dat};
\addplot[boxplot] table[y index=4] {fix-22.dat};
\end{axis}
\end{tikzpicture}
\end{tabular} \vspace{-.75\baselineskip}
\caption{Boxplots of RMSEs for estimating $f''_{01}$ (left) and $f''_{02}$ (right) in fixed design setting.}
\label{fig:boxplot-fix-2}
\vspace{\baselineskip}
\begin{tabular}{cc}
\begin{tikzpicture}
\begin{axis}[
width      = 0.5\textwidth,
height     = 0.35\textwidth,
boxplot/draw direction=y,
boxplot/every box/.style={fill=gray!50},
boxplot/box extend=0.8,
xticklabel style = {align=center, font=\small, rotate=60},
xtick={1,2,3,4},
xticklabels={
Sobolev, Mat\'ern , locpol, pspline},
ymin = -10,
ymax = 310,
ytick={0,50,100,150,200,250,300},
yticklabels={0,50,100,150,200,250,300},
]
\addplot[boxplot] table[y index=0] {fix-13.dat};
\addplot[boxplot] table[y index=1] {fix-13.dat};
\addplot[boxplot] table[y index=2] {fix-13.dat};
\addplot[boxplot] table[y index=3] {fix-13.dat};
\end{axis}
\end{tikzpicture}
\begin{tikzpicture}
\begin{axis}[
width      = 0.5\textwidth,
height     = 0.35\textwidth,
boxplot/draw direction=y,
boxplot/every box/.style={fill=gray!50},
boxplot/box extend=0.8,
xticklabel style = {align=center, font=\small, rotate=60},
xtick={1,2,3,4},
xticklabels={
Sobolev, Mat\'ern , locpol, pspline},
ymin = -100,
ymax = 2100,
ytick={0,500,1000,1500,2000},
yticklabels={0,500,1000,1500,2000},
]
\addplot[boxplot] table[y index=0] {fix-23.dat};
\addplot[boxplot] table[y index=1] {fix-23.dat};
\addplot[boxplot] table[y index=2] {fix-23.dat};
\addplot[boxplot] table[y index=3] {fix-23.dat};
\end{axis}
\end{tikzpicture}
\end{tabular} \vspace{-.75\baselineskip}
\caption{Boxplots of RMSEs for estimating $f'''_{01}$ (left) and $f'''_{02}$ (right) in fixed design setting. LowLSR is not applicable to estimate the third derivative.}
\label{fig:boxplot-fix-3}
\end{figure}

\begin{figure}[H]
\centering
\begin{tabular}{cc}
\begin{tikzpicture}
\begin{axis}[
width      = 0.48\textwidth,
height     = 0.4\textwidth,
xlabel = $x$,
ylabel = $f'_{01}(x) \text{, } \hat f'_{01}(x)$,
xmin = -0.05,
xmax = 1.05,
ymin = -3.2,
ymax = 2.2,
ytick      = {-3, -2, -1, 0, 1, 2},
yticklabels= {-3, -2, -1, 0, 1, 2},
]
\addplot[mark=none,  black,   ultra thick] table[x index = 0, y index = 1]{fix-1.dat};
\addplot[mark=none,  green, very thick, dashed] table[x index = 0, y index = 2]{fix-1.dat};
\addplot[mark=none,  red, very thick, dashed] table[x index = 0, y index = 3]{fix-1.dat};
\addplot[mark=none,  blue, very thick, dash pattern={on 10pt off 4pt}] table[x index = 0, y index = 4]{fix-1.dat};
\addplot[mark=none,  yellow, very thick, dash pattern={on 10pt off 4pt}] table[x index = 0, y index = 5]{fix-1.dat};
\addplot[mark=none,  gray, very thick, dash pattern={on 10pt off 4pt}] table[x index = 0, y index = 1]{fix-1-supp.dat};
\end{axis}
\end{tikzpicture}

\begin{tikzpicture}
\begin{axis}[
width      = 0.48\textwidth,
height     = 0.4\textwidth,
xlabel = $x$,
ylabel = $f'_{02}(x) \text{, } \hat f'_{02}(x)$,
xmin = -0.05,
xmax = 1.05,
ymin = -16,
ymax = 16,
ytick      = {-15, -10, -5, 0, 5, 10, 15},
yticklabels= {-15, -10, -5, 0, 5, 10, 15},
]
\addplot[mark=none,  black,   ultra thick] table[x index = 0, y index = 1]{fix-2.dat};
\addplot[mark=none,  green, very thick, dashed] table[x index = 0, y index = 2]{fix-2.dat};
\addplot[mark=none,  red, very thick, dashed] table[x index = 0, y index = 3]{fix-2.dat};
\addplot[mark=none,  blue, very thick, dash pattern={on 10pt off 4pt}] table[x index = 0, y index = 4]{fix-2.dat};
\addplot[mark=none,  yellow, very thick, dash pattern={on 10pt off 4pt}] table[x index = 0, y index = 5]{fix-2.dat};
\addplot[mark=none,  gray, very thick, dash pattern={on 10pt off 4pt}] table[x index = 0, y index = 1]{fix-2-supp.dat};
\end{axis}
\end{tikzpicture}
\end{tabular} \vspace{-.75\baselineskip}
\caption{One random run in the Monte Carlo study for estimating $f'_{01}$ (left) and $f'_{02}$ (right) under fixed design: true derivative (full line), KRR with Sobolev kernel (green dashed line), Mat\'ern kernel (red dashed line), locpol (blue long dash), spline (yellow long dash) and LowLSR (grey long dash).}
\label{fig:function-fix}
\end{figure}

\bibliographystyle{apalike}
\bibliography{KRR_v4}

\end{document}